\newcommand{\innp}[2]{\langle {#1} , {#2} \rangle} %
\newtheorem{theorem}{Theorem}[section]
\newtheorem{corollary}{Corollary}[theorem]
\newtheorem{lemma}[theorem]{Lemma}
\newtheorem{definition}{Definition}[section]
\def\ie{\emph{i.e}\onedot}
\def\forkindep{\mathrel{\raise0.2ex\hbox{\ooalign{\hidewidth$\vert$\hidewidth\cr\raise-0.9ex\hbox{$\smile$}}}}}
\definecolor{mydarkblue}{rgb}{0,0.08,0.45}
\DeclareRobustCommand\onedot{\futurelet\@let@token\@onedot}
\def\@onedot{\ifx\@let@token.\else.\null\fi\xspace}
\def\eg{\emph{e.g}\onedot} 
\def\ie{\emph{i.e}\onedot} 
\def\cf{\emph{c.f}\onedot}
\newcommand{\cmark}{\ding{51}}%
\newcommand{\xmark}{\ding{55}}%
\newcommand\numberthis{\addtocounter{equation}{1}\tag{\theequation}}
\newcommand{\SKIP}[1]{}
\newcommand{\dataset}{\mathcal{D}}
\newcommand{\epsmem}{\mathcal{M}}
\newcommand{\ours}{\textsc{orthog-subspace}}
\title{Continual Learning in Low-rank Orthogonal Subspaces}
\author{Arslan Chaudhry$^1$, Naeemullah Khan$^1$, Puneet K. Dokania$^{1,2}$, Philip H. S. Torr$^1$ \\ \\
University of Oxford$^1$ \& Five AI Ltd., UK$^2$ \\
\texttt{\footnotesize arslan.chaudhry@eng.ox.ac.uk} \\
}
\begin{document}

\maketitle

\begin{abstract}
In continual learning (CL), a learner is faced with a sequence of tasks, arriving one after the other, and the goal is to remember all the tasks once the continual learning experience is finished. 
The prior art in CL uses episodic memory, parameter regularization or extensible network structures to reduce interference among tasks, but in the end, all the approaches learn different tasks in a joint vector space. 
We believe this invariably leads to interference among different tasks. 
We propose to learn tasks in different (low-rank) vector subspaces that are kept orthogonal to each other in order to minimize interference.
Further, to keep the gradients of different tasks coming from these subspaces orthogonal to each other, we learn isometric mappings by posing network training as an optimization problem over the Stiefel manifold.
To the best of our understanding, we report, for the first time, strong results over experience-replay baseline with and without memory on standard classification benchmarks in continual learning.\footnote{Code: \url{https://github.com/arslan-chaudhry/orthog_subspace}}
\end{abstract}

\section{Introduction} \label{sec:intro}
\vskip -0.1in
In continual learning, a learner experiences a sequence of tasks with the objective to remember all or most of the observed tasks to speed up transfer of knowledge to future tasks.
Learning from a diverse sequence of tasks is useful as it allows for the deployment of machine learning models that can quickly adapt to changes in the environment by leveraging past experiences.
Contrary to the standard supervised learning setting, where only a single task is available, and where the learner can make several passes over the dataset of the task, the sequential arrival of multiple tasks poses unique challenges for continual learning. 
The chief one among which is catastrophic forgetting~\citep{mccloskey1989catastrophic}, whereby the global update of model parameters on the present task interfere with the learned representations of past tasks.
This results in the model forgetting the previously acquired knowledge. 

In neural networks, to reduce the deterioration of accumulated knowledge, existing approaches modify the network training broadly in three different ways. 
First, \emph{regularization-based} approaches~\citep{Kirkpatrick2016EWC,Zenke2017Continual,aljundi2017memory,chaudhry2018riemannian,nguyen2017variational} reduce the drift in network parameters that were important for solving previous tasks. 
Second, \emph{modular} approaches~\citep{rusu2016progressive,lee2017lifelong} add network components as new tasks arrive.
These approaches rely on the knowledge of correct module selection at test time. 
Third, and perhaps the strongest, \emph{memory-based} approaches~\citep{lopez2017gradient,hayes2018memory,isele2018selective,riemer2018learning}, maintain a small replay buffer, called episodic memory, and mitigate catastrophic forgetting by replaying the data in the buffer along with the new task data. 
One common feature among all the three categories is that, in the end, all the tasks are learned in the same vector space where a vector space is associated with the output of a hidden layer of the network. 
We believe this restriction invariably leads to forgetting of past tasks.

In this work, we propose to learn different tasks in different vector subspaces. 
We require these subspaces to be orthogonal to each other in order to prevent the learning of a task from interfering catastrophically with previous tasks.
More specifically, for a point in the vector space in $\mathbb{R}^m$, typically the second last layer of the network, we project each task to a low-dimensional subspace by a task-specific projection matrix $P \in \mathbb{R}^{m \times m}$, whose rank is $r$, where $r \ll m$.
The projection matrices are generated offline such that for different tasks they are mutually orthogonal.
This simple projection in the second last layer reduces the forgetting considerably in the shallower networks -- the average accuracy increases by up to $13$\% and forgetting drops by up to $66$\% compared to the strongest experience replay baseline~\citep{chaudhry2019er} in a three-layer network.
However, in deeper networks, the backpropagation  of gradients from the different projections of the second last layer do not remain orthogonal to each other in the earlier layers resulting in interference in those layers.
To reduce the interference, we use the fact that a gradient on an earlier layer is a transformed version of the gradient received at the projected layer -- where the transformation is linear and consists of the product of the weight matrix and the diagonal Jacobian matrix of the non-linearity of the layers in between. 
Reducing interference then requires this transformation to be an inner-product preserving transformation, such that, if two vectors are orthogonal at the projected layer, they remain close to orthogonal after the transformation. 
This is equivalent to learning orthonormal weight matrices -- a well-studied problem of learning on Stiefel manifolds~\citep{absil2009optimization,bonnabel2013stochastic}.
Our approach, dubbed \ours{}, generates two projected orthogonal vectors (gradients) -- one for the current task and another for one of the previous tasks whose data is stored in a tiny replay buffer -- and updates the network weights such that the weights remain on a Stiefel manifold.
We visually describe our approach in Fig.~\ref{fig:method}.
For the same amount of episodic memory, \ours{}, improves upon the strong experience replay baseline by $8$\% in average accuracy and $50$\% in forgetting on deeper networks.

\begin{figure}
    \centering
    \includegraphics[scale=0.5]{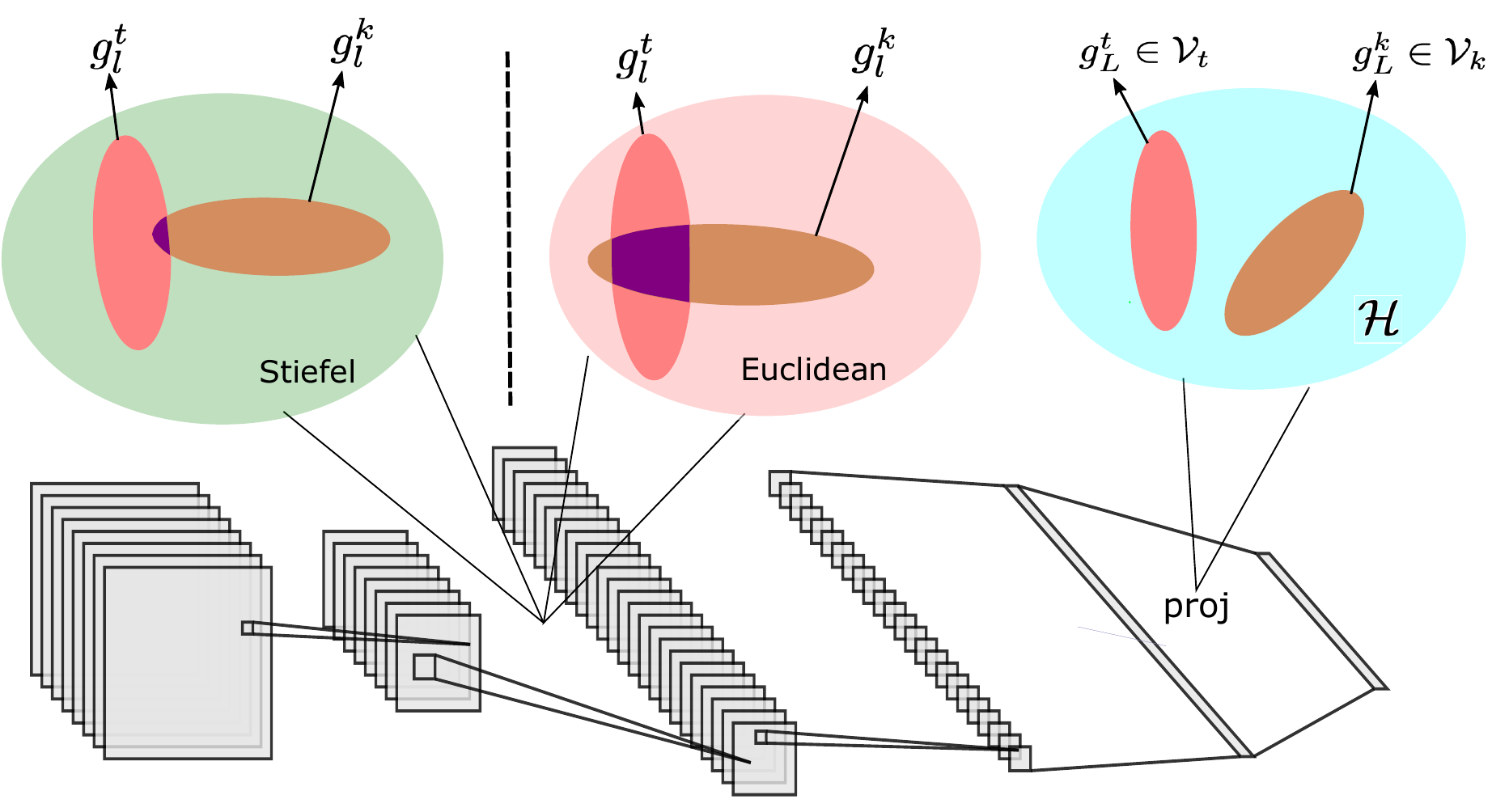}
    \caption{\emph{\small \ours{}. 
    Each blob, with the three ellipses, represents a vector space and its subspaces at a certain layer.
    The projection operator in the layer $L$ keeps the subspaces orthogonal (no overlap).
    The overlap in the intermediate layers is minimized when the weight matrices are learned on the Stiefel manifold.}}
    \label{fig:method}
    \vskip -0.1in
\end{figure}

\section{Background} \label{sec: background}
In this section, we describe the continual learning setup followed by necessary preliminaries for our approach. 

\subsection{Continual Learning Setup} \label{sec:setup}

We assume a continual learner experiencing a stream of data triplets $(x_i, y_i, t_i)$ containing an input $x_i$, a target $y_i$, and a task identifier $t_i \in \mathcal{T} = \{1, \ldots, T\}$.
Each input-target pair $(x_i, y_i) \in \mathcal{X} \times \mathcal{Y}_{t_i}$ is an identical and independently distributed example drawn from some unknown distribution $P_{t_i}(X, Y)$, representing the $t_i$-th learning task.
We assume that the tasks are experienced in order $t_i \leq t_j$ for all $i \leq j$, and the learner cannot store any but a few samples from $P_{t_i}$ in a tiny replay buffer $\epsmem_i$.
Under this setup, our goal is to estimate a predictor $f = (w \circ \Phi) : \mathcal{X} \times \mathcal{T} \to \mathcal{Y}$, composed of a feature extractor $\Phi_{\Theta} : \mathcal{X} \to \mathcal{H}$, which is an $L$-layer feed-forward neural network parameterized by $\Theta=\{W_l\}_{l=1}^L$, and a classifier $w_{\theta} : \mathcal{H} \to \mathcal{Y}$, that minimizes the multi-task error 
\begin{equation}
    \frac{1}{T} \sum_{t=1}^T \mathbb{E}_{(x, y) \sim P_{t}}\left[\, \ell(f(x, t), y) \,\right],
    \label{eq:multitask}
\end{equation}
where $\mathcal{H} \in \mathbb{R}^m$ is an inner product space, $\mathcal{Y} = \cup_{t \in \mathcal{T}} \mathcal{Y}_{t}$, and $\ell : \mathcal{Y} \times \mathcal{Y} \to \mathbb{R}_{\ge 0}$ is a loss function.

To further comply with the strict sequential setting, similar to prior work~\citep{lopez2017gradient,riemer2018learning}, we consider streams of data that are \emph{experienced only once}.
We only focus on classification tasks where either the input or output distribution changes over time. 
We assume that a task descriptor, identifying the correct classification head, is given at both train and test times. 

\subsubsection*{Metrics}
Once the continual learning experience is finished, we measure two statistics to evaluate the quality of the algorithm: \emph{average accuracy}, and \emph{average maximum forgetting}.
First, the average accuracy is defined as
\begin{equation}
    \text{Accuracy} = \frac{1}{T} \sum_{j=1}^T a_{T, j},
    \label{eq:accuracy}
\end{equation}
where $a_{i,j}$ denotes the test accuracy on task $j$ after the model has finished experiencing task $i$.
Second, the average maximum forgetting is defined as
\begin{equation}
    \text{Forgetting} = \frac{1}{T-1} \sum_{j=1}^{T-1} \max_{l \in \{1, \ldots, T-1\}} (a_{l, j} - a_{T, j}),
    \label{eq:forgetting}
\end{equation}
that is, the decrease in performance for each of the tasks between their peak accuracy and their accuracy after the continual learning experience is finished.

\subsection{Preliminaries} \label{sec:prelim}

Let the inner product in $\mathcal{H}$ be denoted by $\innp{\cdot}{\cdot}$, and $v$ be an element of $\mathcal{H}$.
A matrix $O \in \mathbb{R}^{m \times r}$, where $r \ll m$, parameterizes an $m \times m$ dimensional orthogonal projection matrix $P$, given by $P = O(O^{\top}O)^{-1}O^{\top}$, where $\textrm{rank}(P)=r$.
A vector $u=Pv$, will be the projection of $v$ in a subspace $\mathcal{U} \subset \mathcal{H}$ with $\textrm{dim}(\mathcal{U})=r$.
Furthermore, if the columns of $O$ are assumed to be orthonormal, then the projection matrix is simplified to $P = OO^{\top}$. 

\begin{definition}[Orthogonal Subspace]
Subspaces $\mathcal{U}$ and $\mathcal{W}$ of a vector space $\mathcal{H}$ are orthogonal if
$$\innp{u}{w}=0, \quad \forall u \in \mathcal{U}, w \in \mathcal{W}.$$
\label{def:orthog_subspace}
\end{definition}

\begin{definition}[Isometry]
A linear transformation $T: \mathcal{V} \to \mathcal{V}$ is called an isometry if it is distance preserving \ie
$$\|T(v) - T(w)\| = \|v - w\|, \quad \forall v, w \in \mathcal{V}.$$
\label{def:isometry}
\end{definition}

A linear transformation that preserves distance must preserve angles and vice versa. We record this in the following theorem. 
\begin{theorem}
$T$ is an isometry iff it preserves inner products. 
\end{theorem}
The proof is fairly standard and given in Appendix Appendix~\ref{sec:isometry_proof}.
\begin{corollary} \label{cor:iso_compos}
If $T_1$ and $T_2$ are two isometries then their composition $T_1 \circ T_2$ is also an isometry. 
\end{corollary}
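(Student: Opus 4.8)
The plan is to verify the defining property of an isometry (Definition~\ref{def:isometry}) directly for the composition, chaining the distance-preserving property of $T_2$ and then of $T_1$. First I would note that $T_1 \circ T_2$ is again a linear transformation from $\mathcal{V}$ to $\mathcal{V}$, being a composition of linear self-maps of $\mathcal{V}$, so the only thing left to check is that it preserves distances.

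For that, I would fix arbitrary $v, w \in \mathcal{V}$ and apply the hypotheses one at a time. Since $T_1$ is an isometry, applying its distance-preserving property to the vectors $T_2(v)$ and $T_2(w)$ gives $\|T_1(T_2(v)) - T_1(T_2(w))\| = \|T_2(v) - T_2(w)\|$. Since $T_2$ is an isometry, the right-hand side equals $\|v - w\|$. Combining the two equalities yields $\|(T_1 \circ T_2)(v) - (T_1 \circ T_2)(w)\| = \|v - w\|$ for all $v, w$, which is exactly the condition in Definition~\ref{def:isometry}.

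An alternative, and arguably cleaner, route is to go through the theorem just proved: since $T_1$ and $T_2$ each preserve inner products, for any $v, w \in \mathcal{V}$ we have $\innp{(T_1 \circ T_2)(v)}{(T_1 \circ T_2)(w)} = \innp{T_2(v)}{T_2(w)} = \innp{v}{w}$, so $T_1 \circ T_2$ preserves inner products and is therefore an isometry by the theorem. Either way there is no genuine obstacle; the only point that deserves a word of care is that both maps act on the same space $\mathcal{V}$, so that the composition is well-defined and is again a self-map of $\mathcal{V}$. Finally, a trivial induction extends the statement to any finite composition $T_1 \circ T_2 \circ \cdots \circ T_k$ of isometries, which is the form actually used later when composing the layerwise transformations of the network.
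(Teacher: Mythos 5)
Your proof is correct and is exactly the standard argument the paper implicitly relies on (the corollary is stated without proof, as an immediate consequence of the definition and the preceding theorem). Both of your routes — chaining the distance-preserving property, or chaining inner-product preservation via the theorem — are valid, and your remark that the maps are self-maps of the same space $\mathcal{V}$ (so the composition is well-defined and linear) covers the only point needing care.
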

An \emph{orthogonal matrix} preserves inner products and therefore acts as an isometry of Euclidean space.
Enforcing orthogonality\footnote{Note, an orthogonal matrix is always square. However, the matrices we consider can be nonsquare. In this work, the orthogonal matrix is used in the sense of $W^{\top}W = \mathbf{I}$.} during network training corresponds to solving the following constrained optimization problem:
\begin{align}
    & \min_{\theta, \Theta=\{W_l, b_l\}_{l=1}^L} \ell(f(x, t), y), \nonumber \\
    & \textrm{s.t.} \quad W_l^{\top} W_l = \mathbf{I}, \quad \forall l \in \{1,\cdots,L\},
\end{align}
where $\mathbf{I}$ is an identity matrix of appropriate dimensions. 
The solution set of the above problem is a valid Riemannian manifold when the inner product is defined. 
It is called the Stiefel manifold, defined as $\bar{\mathcal{M}}_l = \{ W_l \in \mathbb{R}^{n_l \times n_{l-1}} | W_l^{\top}W_l = \mathbf{I} \}$, where $n_l$ is the number of neurons in layer $l$, and it is assumed that $n_l \geq n_{l-1}$.
For most of the neural network architectures, this assumption holds. 
For a convolutional layer $W_l \in \mathbb{R}^{c_{out} \times c_{in} \times h \times w}$, we reshape it to $W_l \in \mathbb{R}^{c_{out} \times (c_{in}\cdot h \cdot w)}$.

The optimization of a differentiable cost function over a Stiefel manifold has been extensively studied in literature~\citep{absil2009optimization,bonnabel2013stochastic}. 
Here, we briefly summarize the two main steps of the optimization process and refer the reader to \citet{absil2009optimization} for further details.
For a given point $W_l$ on the Stiefel manifold, let $\mathcal{T}_{W_l}$ represent the tangent space at that point. 
Further, let $g_l$, a matrix, be the gradient of the loss function with respect to $W_l$. 
The first step of optimization projects $g_l$ to $\mathcal{T}_{W_l}$ using a closed form $Proj_{\mathcal{T}_{W_l}}(g_l) = AW_l$, where `$A$' is a skew-symmetric matrix given by (see Appendix~\ref{sec:tangent_action} for the derivation):
\begin{equation} \label{eq:tangent_cipp}
    A=g_{l}W_l^{\top} - W_l g_l^{\top}.
\end{equation}

Once the gradient projection in the tangent space is found, the second step is to generate a descent curve of the loss function in the manifold.
The Cayley transform defines one such curve using a parameter $\tau \geq 0$, specifying the length of the curve, and a skew-symmetric matrix $U$ \citep{nishimori2005learning}: 
\begin{equation} \label{eq:cayley}
    Y(\tau) = \Big(I + \frac{\tau}{2}U \Big)^{-1} \Big(I - \frac{\tau}{2}U \Big) W_l,
\end{equation}
It can be seen that the curve stays on the Stiefel manifold \ie $Y(\tau)^{\top}Y(\tau) = \mathbf{I}$ and $Y(0) = W_l$, and that its tangent vector at $\tau = 0$ is $Y^{\prime}(0) = -UW_l$.
By setting $U=A=g_{l}W_l^{\top} - W_l g_l^{\top}$, the curve will be a descent curve for the loss function. 
\citet{li2019efficient} showed that one can bypass the expensive matrix inversion in \eqref{eq:cayley} by following the fixed-point iteration of the Cayley transform,
\begin{equation} \label{eq:iterative_cayley}
     Y(\tau) = W_l - \frac{\tau}{2} A (W_l + Y(\tau)).
\end{equation}
\citet{li2019efficient} further showed that under some mild continuity assumptions~\eqref{eq:iterative_cayley} converges to the closed form~\eqref{eq:cayley} faster than other approximation algorithms. 
The overall optimization on Stiefel manifold is shown in \ref{fig:stiefel_optim}.

\begin{figure}
    \centering
    \includegraphics[scale=0.4]{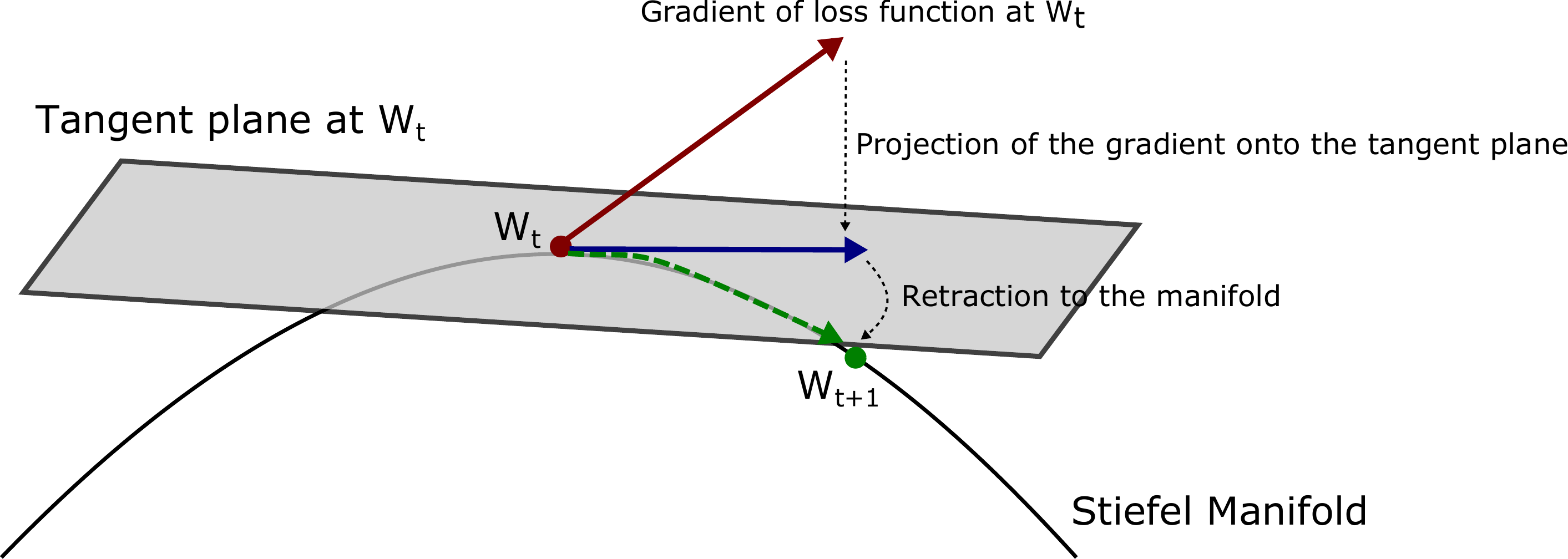}
    \caption[\ours: Update in Stiefel manifold]{Gradient computed at a given point ($W_t$) in the manifold is first projected to the tangent plane. There exists a closed form for this step. This projected gradient is then retracted to a point in the manifold giving the final update $W_{t+1}$.}
    \label{fig:stiefel_optim}
\end{figure}

\section{Continual Learning in Orthogonal Subspaces} \label{sec:method}
\vskip -0.1in
We now describe our continual learning approach.
Consider a feature extractor in the form of a feed-forward neural network consisting of $L$ hidden layers, that takes an input $x \in \mathbb{R}^d$ and passes it through the following recursions: $h_l = \sigma(W_l h_{l-1} + b_l)$, where $\sigma(\cdot)$ is a non-linearity, $h_0 = x$, and $h_L = \phi \in \mathbb{R}^m$. 
The network is followed by an application-specific head (\eg) a classifier in case of a classification task.
The network can be thought of as a mapping, $\Phi: \mathcal{X} \to \mathcal{H}$, from one vector space ($\mathcal{X} \in \mathbb{R}^d$) to another ($\mathcal{H} \in \mathbb{R}^m$).
When the network is trained for more than one tasks, a shared vector space $\mathcal{H}$ can be learned if the model has a simultaneous access to all the tasks. 
In continual learning, on the other hand, when tasks arrive in a sequence, learning a new task can interfere in the space where a previous task was learned.
This can result in the catastrophic forgetting of the previous task if the new task is different from the previous one(s). 
In this work, we propose to learn tasks in \emph{orthogonal subspaces} such that learning of a new task has minimal interference with already learned tasks. 

We assume that the network is sufficiently parameterized, which often is the case with deep networks, so that all the tasks can be learned in independent subspaces. 
We define a family of sets $\mathcal{V}$ that partitions $\mathcal{H}$, such that, $a)$ $\mathcal{V}$ does not contain the empty set ($\emptyset \notin \mathcal{V}$), $b)$ the union of sets in $\mathcal{V}$ is equal to $\mathcal{H}$ ($\cup_{\mathcal{V}_{t} \in \mathcal{V}} \mathcal{V}_{t} = \mathcal{H}$), and $c)$ the intersection of any two distinct sets in $\mathcal{V}$ is empty 
($(\forall \mathcal{V}_{i},\mathcal{V}_{j}\in \mathcal{V})\; i\neq j\implies \mathcal{V}_{i}\cap \mathcal{V}_{j}=\emptyset)$).
A set $\mathcal{V}_t \in \mathcal{V}$ defines a subspace for task $t$.
We obtain such a subspace by projecting the feature map $\phi=h_L \in \mathbb{R}^m$ into an $r$-dimensional space, where $r \ll m$, via a projection matrix $P_t \in \mathbb{R}^{m \times m}$ of rank $r$, \ie we obtain the features for task $t$ by $\phi_t = P_t h_L$, while ensuring: 
\begin{align}\label{eq:projection}
    P_t^{\top}P_t &= \mathbf{I}, \nonumber \\
    P_t^{\top}P_k &= \mathbf{0}, \quad \forall k \neq t. \numberthis
\end{align}
The said projection matrix can be easily constructed by first generating a set of $m$ random orthonormal basis\footnote{We generate a random matrix and apply the Gram–Schmidt process offline, before the continual learning experience begins.} in $\mathbb{R}^m$, then picking $r=\lfloor{m/ T\rfloor}$ of those basis (matrix columns) to form a matrix $O_t$, and, finally, obtaining the projections as $P_t = O_tO_t^{\top}$.
For different tasks these basis form a disjoint set $\mathcal{P}=\{P_1, \cdots, P_T\}$. 
If the total number of tasks exceeds $T$, then one can potentially dynamically resize the $m \times m$ orthogonal matrix while maintaining the required properties. 
For example, to make space for $2T$ tasks one can resize the original matrix to $2m \times 2m$ with zero padding, and backup the previous matrix.
This would entail dynamically resizing the second last layer of the network.
The set $\mathcal{P}$ can be computed offline and stored in a hash table that can be readily fetched given a task identifier.
The projection only adds a single matrix multiplication in the forward pass of the network making it as efficient as standard training. 

\begin{algorithm}[t]
\caption{\emph{\small Training of \ours{} on sequential data $\dataset = \{\dataset_1, \cdots, \dataset_T\}$, with $\Theta=\{W_l\}_{l=1}^L$ initialized as orthonormalized matrices, $\mathcal{P} = \{P_1, \cdots, P_T\}$ orthogonal projections, learning rate `$\alpha$', $s=2$, $q=0.5$, $\epsilon=10^{-8}$.}} 
\footnotesize
\begin{algorithmic}[1]
    \Procedure{\ours{}}{$\dataset, \mathcal{P}, \alpha, s, q, \epsilon$}
    \State $\epsmem \gets \{\}*T$ 
    \For{$t \in \{1, \cdots, T\}$} 
        \For{$(x_t, y_t) \sim \dataset_t$}
            \State $k \sim \{1,\cdots, t-1\}$ \Comment{Sample a past task from the replay buffer}
            \State $(x_k, y_k) \sim \epsmem_k$ \Comment{Sample data from the episodic memory}
            \State $g^t \gets \nabla_{\Theta, \theta}\left(\ell(f(x_t, y_t), P_t)\right)$ \Comment{Compute gradient on the current task}
            \State $g^k \gets \nabla_{\Theta, \theta}\left(\ell(f(x_k, y_k), P_k)\right)$ \Comment{Compute gradient on the past task}
            \State $g \gets g^t + g^k$
            \For{$l=\{1,\cdots, L\}$} \Comment{Layer-wise update on Stiefel manifold}
                \State $A \gets g_lW_l^{\top} - W_{l}g_{l}^{\top}$
                \State $U \gets AW_l$ \Comment{Project the gradient onto the tangent space}
                \State $\tau \gets \min(\alpha, 2q/ (||W_l|| + \epsilon))$ \Comment{Select adaptive learning rate~\citet{li2019efficient}}
                \State $Y^0 \gets W_l - \tau U$ \Comment{Iterative estimation of the Cayley Transform}
                \For{$i=\{1,\cdots, s\}$}
                    \State $Y^i \gets W_l - \frac{\tau}{2}A(W_l + Y^{i-1})$
                \EndFor
            \State $W_l \gets Y^s$
            \EndFor
            \State $\theta \gets \theta - \alpha \cdot g_{L+1}$ \Comment{Update the classifier head}
            \State $\epsmem_t \gets (x_t, y_t)$ \Comment{{Add the sample to a ring buffer}}
        \EndFor
    \EndFor
    \State \textbf{return} $\Theta, \theta$
  \EndProcedure
\end{algorithmic}
\label{alg:orth}
\end{algorithm}

Next, lets examine the effect of the projection step on the backward pass of the backpropagation algorithm. 
For a task $t$, the gradient of the objective $\ell(\cdot, \cdot)$ on any intermediate layer $h_l$ can be decomposed using the chain rule as,
\begin{align*} \label{eq:backprop}
    g_l^t = \frac{\partial \ell}{\partial h_l} &= \left(\frac{\partial \ell}{\partial h_L} \right) \frac{\partial h_L}{\partial h_l} = \left(\frac{\partial \phi_t}{\partial h_L} \frac{\partial \ell}{\partial \phi_t} \right) \frac{\partial h_L}{\partial h_l} , \\
                                       &= \left(P_t \frac{\partial \ell}{\partial \phi_t} \right) \prod_{k=l}^{L-1} \frac{\partial h_{k+1}}{\partial h_k} = g_L^t \prod_{k=l}^{L-1} D_{k+1} W_{k+1}, \numberthis
\end{align*}
where $D_{k+1}$ is a diagonal matrix representing the Jacobian of the pointwise nonlinearity $\sigma_{k+1}(\cdot)$. 
For a ReLU nonlinearity and assuming that the non-linearity remains in the linear region during the training~\citep{serra2017bounding,arora2019fine}, we assume the Jacobian matrix to be an identity. 
It can be seen that for the projected layer $L$ (the second last layer), the gradients of different tasks are orthogonal by construction \ie $g_{L}^t \perp g_{L}^{k \neq t}$ (\cf~\eqref{eq:projection}). 
Hence the gradient interference will be zero at the layer $L$.
However, according to \eqref{eq:backprop}, as the gradients are backpropogated to the previous layers they start to become less and less orthogonal (\cf Fig.~\ref{fig:all_grads_hist}).
This results in interference among different tasks in earlier layers, especially when the network is relatively deep.

Let us rewrite the gradients at the intermediate layer $l$ during the training of task $t$ as a linear transformation of the gradient at the layer $L$ \ie $g_l^t = T(g_L^t)$.
According to \eqref{eq:backprop}, and assuming the Jacobian matrix of the non-linearity to be identity (${D_k}= I$), this transformation is given by 
\begin{equation} \label{eq:approx_transofrmation}
    T(u) = u \prod_{k=l}^{L-1} W_{k+1}.
\end{equation}
As noted earlier, $g_L^t \perp g_L^{k \neq t}$ by construction, then to reduce the interference between any $g_l^t$ and $g_l^{k \neq t}$, the transformation $T(\cdot)$ in \eqref{eq:approx_transofrmation} needs to be such that it preserves the inner-product between $T(g_L^t)$ and $T(g_L^{k \neq t})$.
In other words, $T(\cdot)$ needs to be an isometry~\ref{def:isometry}.
As discussed in Sec.~\ref{sec:prelim}, this is equivalent to ensuring that weight matrices $\{W_l\}_{l=1}^L$ are orthonormal matrices.

We learn orthonormal weights of a neural network by posing the network training as an optimization problem over a Stiefel manifold~\citep{absil2009optimization,bonnabel2013stochastic}. 
More specifically, the network is initialized from random orthonormal matrices. 
A tiny replay buffer, storing the examples of past tasks ($k < t$), is maintained to compute the gradients $\{g_l^k\}_{l=1}^L$.
The gradients on the current task $t$, $\{g_l^t\}_{l=1}^L$, are computed and weights in each layer $l$ are updated as follows: $a$) first the effective gradient $g_l=g_l^t + g_l^k$ is projected onto the tangent space at the current estimate of the weight matrix $W_l$, $b$) the iterative Cayley Transform~\eqref{eq:iterative_cayley} is used to retract the update to the Stiefel manifold. 
The projection onto the tangent space is carried out using the closed-form described in \eqref{eq:tangent_cipp}.
The resulting algorithm keeps the network weights orthonormal throughout the continual learning experience while reducing the loss using the projected gradients.
Fig.~\ref{fig:grads_hist} shows this orthonormality reduces the inner product between the gradients of different tasks.
We denote our approach as $\ours$ and provide the pseudo-code in Alg.~\ref{alg:orth}.

\section{Experiments} \label{sec:experiments}
\vskip -0.1in

We now report experiments on continual learning benchmarks in classification tasks.

\subsection{Continual Learning Benchmarks} \label{sub:datasets_and_tasks}
\vspace{-0.1cm}

We evaluate \emph{average accuracy}~\eqref{eq:accuracy} and \emph{forgetting}~\eqref{eq:forgetting} on four supervised classification benchmarks.
\textbf{Permuted MNIST} is a variant of the MNIST dataset of handwritten digits~\citep{lecun1998mnist} where each task applies a fixed random pixel permutation to the original dataset.
\textbf{Rotated MNIST} is another variant of MNIST, where each task applies a fixed random image rotation (between $0$ and $180$ degrees) to the original dataset.
Both of the MNIST benchmark contain $23$ tasks, each with $10000$ samples from $10$ different classes.
\textbf{Split CIFAR} is a variant of the CIFAR-100 dataset~\citep{krizhevsky2009learning,Zenke2017Continual}, where each task contains the data pertaining $5$ random classes (without replacement) out of the total $100$ classes.
\textbf{Split miniImageNet} is a variant of the ImageNet dataset~\citep{russakovsky15ImageNet,vinyals2016matching}, containing a subset of images and classes from the original dataset.
Similar to Split CIFAR, in Split miniImageNet each task contains the data from $5$ random classes (without replacement) out of the total $100$ classes.
Both CIFAR-100 and miniImageNet contain $20$ tasks, each with $250$ samples from each of the $5$ classes.

Similar to \citet{chaudhry2019agem}, for each benchmark, the first $3$ tasks are used for hyper-parameter tuning (grids are available in Appendix~\ref{sec:supp_hyperparam}).
The learner can perform multiple passes over the datasets of these three initial tasks. 
We assume that the continual learning experience begins after these initial tasks and ignore them in the final evaluation.

\subsection{Baselines} \label{sub:baselines}
\vspace{-0.1cm}
We compare \ours{} against several baselines which we describe next. 
\textbf{Finetune} is a vanilla model trained on a data stream, without any regularization or episodic memory. 
\textbf{ICARL} \citep{Rebuffi16icarl} is a \emph{memory-based} method that uses knowledge-distillation~\citep{hinton2015distilling} and episodic memory to reduce forgetting. 
\textbf{EWC} \citep{Kirkpatrick2016EWC} is a \emph{regularization-based} method that uses the Fisher Information matrix to record the parameter importance. %
\textbf{VCL} \citep{nguyen2017variational} is another \emph{regularization-based} method that uses variational inference to approximate the posterior distribution of the parameters which is regularized during the continual learning experience. 
\textbf{AGEM} \citep{chaudhry2019agem} is a \emph{memory-based} method similar to \citep{lopez2017gradient} that uses episodic memory as an optimization constraint to reduce forgetting on previous tasks. 
\textbf{MER} \citep{riemer2018learning} is another \emph{memory-based} method that uses first-order meta-learning formulation~\citep{metareptile} to reduce forgetting on previous tasks. 
\textbf{ER-Ring} \citep{chaudhry2019er} is the strongest \emph{memory-based} method that jointly trains new task data with that of the previous tasks. 
Finally, \textbf{Multitask} is an oracle baseline that has access to all data to optimize~\eqref{eq:multitask}. 
It is useful to estimate an upper bound on the obtainable Accuracy~\eqref{eq:accuracy}.

\subsection{Code, Architecture and Training Details} \label{sub:arch_details}
\vspace{-0.1cm}

Except for VCL, all baselines use the same unified code base which is made publicly available.
For VCL~\citep{nguyen2017variational}, the official implementation is used which only works on fully-connected networks.
All baselines use the same neural network architectures: a fully-connected network with two hidden layers of 256 ReLU neurons in the MNIST experiments, and a standard ResNet18~\citep{he2016deep} in CIFAR and ImageNet experiments.
All baselines do a single-pass over the dataset of a task, except for episodic memory that can be replayed multiple times.
The task identifiers are used to select the correct output head in the CIFAR and ImageNet experiments.
Batch size is set to $10$ across experiments and models.
A tiny ring memory of $1$ example per class per task is stored for the memory-based methods.
For \ours{}, episodic memory is not used for MNIST experiments, and the same amount of memory as other baselines is used for CIFAR100 and miniImageNet experiments. 
All experiments run for five different random seeds, each corresponding to a different dataset ordering among tasks, that are fixed across baselines. 
Averages and standard deviations are reported across these runs. 

\subsection{Results} \label{sec:results}
\vspace{-0.1cm}

\begin{table}[t]
\begin{center}
\begin{small}
\begin{sc}
\caption{\emph{\small Accuracy~\eqref{eq:accuracy} and Forgetting~\eqref{eq:forgetting} results of continual learning experiments.
When used, episodic memories contain up to one example per class per task.
Last row is a multi-task oracle baseline.}}

\label{tab:mnist_comp}
\resizebox{1.0\columnwidth}{!}{%
\begin{tabular}{lccccc}
\toprule
\multicolumn{1}{l}{\textbf{Method}} & & \multicolumn{2}{c}{\textbf{Permuted MNIST}} &\multicolumn{2}{c}{\textbf{Rotated MNIST}}   \\
\midrule
& Memory & Accuracy & Forgetting & Accuracy & Forgetting \\
\midrule
Finetune                                  & \xmark & 50.6 (\textpm 2.57) & 0.44 (\textpm 0.02) & 43.1 (\textpm 1.20) & 0.55 (\textpm 0.01) \\
EWC~\citep{Kirkpatrick2016EWC}            & \xmark & 68.4 (\textpm 0.76) & 0.25 (\textpm 0.01) & 43.6 (\textpm 0.81) & 0.53 (\textpm 0.01) \\
VCL~\citep{nguyen2017variational}         & \xmark & 51.8 (\textpm 1.54) & 0.44 (\textpm 0.01) & 48.2 (\textpm 0.99) & 0.50 (\textpm 0.01) \\
VCL-Random~\citep{nguyen2017variational}  & \cmark & 52.3 (\textpm 0.66) & 0.43 (\textpm 0.01) & 54.4 (\textpm 1.44) & 0.44 (\textpm 0.01) \\
AGEM~\citep{chaudhry2019agem}             & \cmark & 78.3 (\textpm 0.42) & 0.15 (\textpm 0.01) & 60.5 (\textpm 1.77) & 0.36 (\textpm 0.01) \\
MER~\citep{riemer2018learning}            & \cmark & 78.6 (\textpm 0.84) & 0.15 (\textpm 0.01) & 68.7 (\textpm 0.38) & 0.28 (\textpm 0.01) \\
ER-Ring~\citep{chaudhry2019er}            & \cmark  & 79.5 (\textpm 0.31) & 0.12 (\textpm 0.01) & 70.9 (\textpm 0.38) & 0.24 (\textpm 0.01) \\
\textbf{\ours{} (ours)} & \xmark  & \textbf{86.6} (\textpm 0.91) & \textbf{0.04} (\textpm 0.01) & \textbf{80.1} (\textpm 0.95) & \textbf{0.14} (\textpm 0.01) \\
\midrule
Multitask & & 91.3 & 0.0 & 94.3 & 0.0 \\
\bottomrule
\end{tabular}}

\vskip 0.5cm
\resizebox{1.0\columnwidth}{!}{%
\begin{tabular}{lccccc}
\toprule
\multicolumn{1}{l}{\textbf{Method}}  & & \multicolumn{2}{c}{\textbf{Split CIFAR}}  &\multicolumn{2}{c}{\textbf{Split miniImageNet}} \\
\midrule
& Memory & Accuracy & Forgetting & Accuracy & Forgetting \\
\midrule
Finetune                             & \xmark  & 42.6 (\textpm 2.72) & 0.27 (\textpm 0.02) & 36.1 (\textpm 1.31) & 0.24 (\textpm 0.03) \\
EWC~\citep{Kirkpatrick2016EWC}       & \xmark  & 43.2 (\textpm 2.77) & 0.26 (\textpm 0.02) & 34.8 (\textpm 2.34) & 0.24 (\textpm 0.04) \\
ICARL~\citep{Rebuffi16icarl}         & \cmark  & 46.4 (\textpm 1.21) & 0.16 (\textpm 0.01) & - & - \\
AGEM~\citep{chaudhry2019agem}        & \cmark  & 51.3 (\textpm 3.49) & 0.18 (\textpm 0.03) & 42.3 (\textpm 1.42) & 0.17 (\textpm 0.01)\\
MER~\citep{riemer2018learning}       & \cmark  & 49.7 (\textpm 2.97) & 0.19 (\textpm 0.03) & 45.5 (\textpm 1.49) & 0.15 (\textpm 0.01) \\
ER-Ring~\citep{chaudhry2019er}       & \cmark  & 59.6 (\textpm 1.19) & 0.14 (\textpm 0.01) & 49.8 (\textpm 2.92) & 0.12 (\textpm 0.01)\\
\textbf{\ours{} (ours)} & \cmark  & \textbf{64.3} (\textpm 0.59) & \textbf{0.07} (\textpm 0.01) & \textbf{51.4} (\textpm 1.44) & \textbf{0.10} (\textpm 0.01) \\
\midrule
Multitask                           & & 71.0 & 0.0 & 65.1 & 0.0 \\
\bottomrule
\end{tabular}}
\label{tab:main_results}
\end{sc}
\end{small}
\end{center}
\vskip -0.2in
\end{table}

Tab.~\ref{tab:main_results} shows the overall results on all benchmarks. 
First, we observe that on relatively shallower networks (MNIST benchmarks), even without memory and preservation of orthogonality during the network training, \ours{} outperform the strong memory-based baselines by a large margin: $+7.1$\% and $+9.2$\% absolute gain in average accuracy, $66$\% and $42$\% reduction in forgetting compared to the strongest baseline (ER-Ring), on Permuted and Rotated MNIST, respectively. 
This shows that learning in orthogonal subspaces is an effective strategy in reducing interference among different tasks.
Second, for deeper networks, when memory is used and orthogonality is preserved, \ours{} improves upon ER-Ring considerably: $4.7$\% and $1.6$\% absolute gain in average accuracy, $50$\% and $16.6$\% reduction in forgetting, on CIFAR100 and miniImageNet, respectively. 
While we focus on tiny episodic memory, in Tab.~\ref{tab:large_mem_size} of Appendix~\ref{sec:more_results}, we provide results for larger episodic memory sizes.
Our conclusions on tiny memory hold, however, the gap between the performance of ER-Ring and \ours{} gets reduced as the episodic memory size is increased.
The network can sufficiently mitigate forgetting by relearning on a large episodic memory.

Tab.~\ref{tab:subspace_abal} shows a systematic evaluation of projection~\eqref{eq:projection}, episodic memory and orthogonalization~\eqref{eq:iterative_cayley} in \ours{}.
First, without memory and orthogonalization, while a simple projection yields competitive results compared to various baselines (\cf Tab.~\ref{tab:main_results}), the performance is still a far cry from ER-Ring.
However, when the memory is used along with subspace projection one can already see a better performance compared to ER-Ring. 
Lastly, when the orthogonality is ensured by learning on a Stiefel manifold, the model achieves the best performance both in terms of accuracy and forgetting.

\begin{table}[t]
\begin{center}
\begin{small}
\begin{sc}
\caption{\emph{\small Systematic evaluation of Projection, Memory and Orthogonalization in \ours{}.}}
\resizebox{1.0\columnwidth}{!}{%
\begin{tabular}{ccccccc}
\toprule
\multicolumn{3}{c}{\textbf{Method}} &\multicolumn{2}{c}{\textbf{Split CIFAR}} &\multicolumn{2}{c}{\textbf{Split miniImageNet}}   \\
\midrule
Projection & ER & Stiefel & Accuracy & Forgetting & Accuracy & Forgetting \\
\midrule
\cmark & \xmark & \xmark  & 50.3 (\textpm 2.21) & 0.21 (\textpm 0.02) & 40.1 (\textpm 2.16) & 0.20 (\textpm 0.02) \\
\xmark & \cmark & \xmark  & 59.6 (\textpm 1.19) & 0.14 (\textpm 0.01) & 49.8 (\textpm 2.92) & 0.12 (\textpm 0.01) \\
\cmark & \cmark & \xmark  & 61.2 (\textpm 1.84) & 0.10 (\textpm 0.01) & 49.5 (\textpm 2.21) & 0.11 (\textpm 0.01) \\
\cmark & \cmark & \cmark  & 64.3 (\textpm 0.59) & 0.07 (\textpm 0.01) & 51.4 (\textpm 1.44) & 0.10 (\textpm 0.01) \\
\bottomrule
\end{tabular}}
\label{tab:subspace_abal}
\end{sc}
\end{small}
\end{center}
\end{table}

Finally, Fig.~\ref{fig:grads_hist} shows the distribution of the inner product of gradients between the current and previous tasks, stored in the episodic memory.
Everything is kept the same except in one case the weight matrices are learned on the Stiefel manifold while in the other no such constraint is placed on the weights.
We observe that when the weights remain on the Stiefel manifold, the distribution is more peaky around zero. 
This empirically validates our hypothesis that by keeping the transformation~\eqref{eq:approx_transofrmation} isometric, the gradients of different tasks remain near orthogonal to one another in all the layers.

\begin{figure}[t]
	\begin{minipage}[t]{0.5\textwidth}
        \begin{center}
                \includegraphics[scale=0.7]{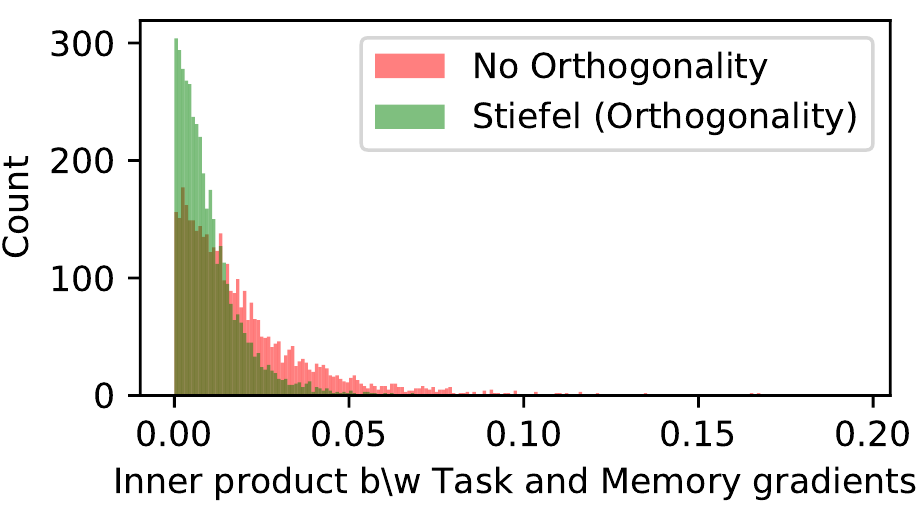}\\
                 {\small Layer $15$}
        \end{center}
        \end{minipage}\hfill
        \begin{minipage}[t]{0.5\textwidth}
        \begin{center}
                \includegraphics[scale=0.7]{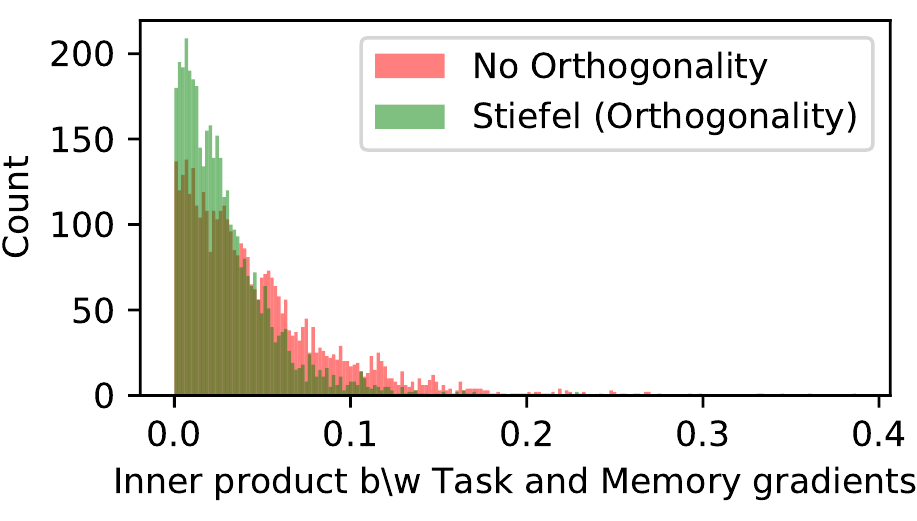}\\
                 {\small Layer $10$}
        \end{center}
        \end{minipage}%
\caption{\emph{\small Histogram of inner product of current task and memory gradients in different layers in Split CIFAR.
The more left the distribution is the more orthogonal the gradients are and the less the interference is between the current and previous tasks.
}}
\label{fig:grads_hist}
\vskip -0.1in
\end{figure}

\section{Related work} \label{sec:related_work}
\vskip -0.1in

In continual learning~\citep{ring1997child}, also called lifelong learning~\citep{thrun1998lifelong}, a learner faces a {\em sequence} of tasks without storing the complete datasets of these tasks.
This is in contrast to {\em multitask learning}~\citep{caruana1997multitask}, where the learner can simultaneously access data from all tasks.
The objective in continual learning is to avoid catastrophic forgetting
The main challenge in continual learning is to avoid catastrophic forgetting~\citep{mccloskey1989catastrophic,mcclelland1995there,goodfellow2013empirical} on already seen tasks so that the learner is able to learn new tasks quickly.
Existing literature in continual learning can be broadly categorized into three categories.

First, \emph{regularization approaches} reduce the drift in parameters important for past tasks~\citep{Kirkpatrick2016EWC,aljundi2017memory,nguyen2017variational,Zenke2017Continual}.
For the large number of tasks, the parameter importance measures suffer from brittleness as the locality assumption embedded in the regularization-based approaches is violated~\citep{titsias2019functional}.
Furthermore,~\citet{chaudhry2019agem} showed that these approaches can only be effective when the learner can perform multiple passes over the datasets of each task -- a scenario not assumed in this work. 
Second, \emph{modular approaches} use different network modules that can be extended for each new task~\citep{fernando2017pathnet,aljundi2017expert,rosenbaum2018routing,chang2018,rcl2018,modularmetal2018}.
By construction, modular approaches have zero forgetting, but their memory requirements increase with the number of tasks~\citep{rusu2016progressive,lee2017lifelong}.
Third, \emph{memory approaches} maintain and replay a small episodic memory of data from past tasks.
In some of these methods~\citep{li2016learning,Rebuffi16icarl}, examples in the episodic memory are replayed and predictions are kept invariant by means of distillation~\citep{hinton2015distilling}.
In other approaches~\citep{lopez2017gradient,chaudhry2019agem,aljundi2019online} the episodic memory is used as an optimization constraint that discourages increases in loss at past tasks.
Some works~\citep{hayes2018memory,riemer2018learning,rolnick18,chaudhry2019er,chaudhry2020HAL} have shown that directly optimizing the loss on the episodic memory, also known as experience replay, is cheaper than constraint-based approaches and improves prediction performance.
Recently, \citet{ameyaGdumb2020} showed that training at test time, using a greedily balanced collection of episodic memory, improved performance on a variety of benchmarks.
Similarly, \citet{javed2019meta,beaulieu2020learning} showed that learning transferable representations via meta-learning reduces forgetting when the model is trained on sequential tasks.

Similar in spirit to our work is OGD~\citet{farajtabar2019orthogonal} where the gradients of each task are learned in the orthogonal space of all the previous tasks' gradients. 
However, OGD differs significantly from our work in terms of memory and compute requirements.
Unlike OGD, where the memory of previous task gradients is maintained, which is equivalent to storing $n_t \times S$ dimensional matrix for each task, where $n_t$ are the number of examples in each task and $S$ is the network size, we only store $m \times r$ dimensional matrix $O_t$, where $m$ is the dimension of the feature vector ($m \ll S$) and $r$ is the rank of the subspace, and a tiny replay buffer for each task. 
For large network sizes, OGD is impractical to use.
Furthermore, at each training step OGD subtracts the gradient projections from the space spanned by the gradients in memory, whereas we only project the feature extraction layer to a subspace and maintain orthogonality via learning on Stiefel manifolds. 

Finally, learning orthonormal weight matrices has been extensively studied in literature. 
Orthogonal matrices are used in RNNs for avoiding exploding/ vanishing gradient problem~\citep{arjovsky2016unitary,wisdom2016full,jing2017tunable}. 
While the weight matrices are assumed to be square in the earlier works, works including~\citep{huang2018orthogonal} considered learning non-square orthogonal matrices (called orthonormal matrices in this work) by optimizing over the Stiefel manifolds.
More recently,~\citet{li2019efficient} proposed an iterative version of Cayley transform~\citep{nishimori2005learning}, a key component in optimizing over Stiefel manifolds.
Whereas optimizing over the Stiefel manifold ensure strict orthogonality in the weights,~\citep{jia2019orthogonal} proposed an algorithm, Singular Value Bounding (SVB), for soft orthogonality constraints. 
We use strict orthogonality in this work and leave the exploration of soft orthogonality for future research.
\vskip -0.1in
\section{Conclusion} \label{sec:conclude}
\vskip -0.1in
We presented \ours{}, a continual learning method that learns different tasks in orthogonal subspaces.
The gradients in the projected layer are kept orthogonal in earlier layers by learning isometric transformations. 
The isometric transformations are learned by posing the network training as an optimization problem over the Stiefel manifold.
The proposed approach improved considerably over strong memory replay-based baselines in standard continual learning benchmarks of image classification.

\section{Broader Impact} \label{sec:impact}

Continual learning methods like the one we propose allow machine learning models to efficiently learn on new data without requiring constant retraining on previous data.
This type of learning can be useful when the model is expected to perform in multiple environments and a simultaneous retraining on all the environments is not feasible.
However, one of the core assumptions in continual learning is that model should have zero forgetting on previous data.
In some scenarios, partially forgetting old data may be acceptable or even preferable, for example if older data was more biased (in any sense) than more recent data. 
A machine learning practitioner should be aware of this fact and use continual learning approaches only when suitable.
\section*{Acknowledgment}
This work was supported by EPSRC/MURI grant EP/N019474/1, Facebook (DeepFakes grant), Five AI UK, and the Royal Academy of Engineering under the Research Chair and Senior Research Fellowships scheme.
AC is funded by the Amazon Research Award (ARA) program. 
\bibliographystyle{abbrvnat}
\bibliography{myBib}
\clearpage
\clearpage
\newpage
\onecolumn
\section*{Appendix}
Section~\ref{sec:isometry_proof} provides a proof that isometry preserves angles.
Section~\ref{sec:tangent_action} derives the closed-form of the gradient projection on the tangent space at a point in the Stiefel manifold. 
Section~\ref{sec:more_results} gives further experimental results. 
Section~\ref{sec:supp_hyperparam} lists the grid considered for hyper-parameters.
\appendix

\section{Isometry Preserves Angles} \label{sec:isometry_proof}
\begin{theorem}
$T$ is an isometry iff it preserves inner products. 
\end{theorem}
\begin{proof}
Suppose $T$ is an isometry. 
Then for any $v, w \in V$, 
\begin{align*}
\|T(v)-T(w)\|^2 &=\|v-w\|^2\\
\innp{T(v)-T(w)}{T(v)-T(w)} &= \innp{v-w}{v-w}\\
\|T(v)\|^2+\|T(w)\|^2-2\innp{T(v)}{T(w)}
&= \|v\|^2+\|w\|^2-2\innp{v}{w}.\\
\end{align*}

Since $\|T(u)\|=\|u\|$ for any $u$ in $V$, all the length squared terms in the last expression above cancel out and we get
\begin{align*}
\innp{T(v)}{T(w)} &= \innp{v}{w}.\\
\end{align*}

Conversely, if $T$ preserves inner products, then
\begin{align*}
\innp{T(v-w)}{T(v-w)} = \innp{v-w}{v-w},
\end{align*}
which implies
\begin{align*}
\|T(v-w)\| = \|v-w\|,
\end{align*}
and since $T$ is linear,
\begin{align*}
\|T(v)-T(w)\| = \|v-w\|.
\end{align*}
This shows that $T$ preserves distance. 
\end{proof}

\section{Closed-form of Projection in Tangent Space} \label{sec:tangent_action}
This section closely follows the arguments of \citet{tagare2011notes}. 

Let $\{X \in \mathbb{R}^{n \times p} | X^{\top}X=I\}$ defines a manifold in Euclidean space $\mathbb{R}^{n \times p}$, where $n>p$.
This manifold is called the Stiefel manifold. 
Let $\mathcal{T}_X$ denotes a tangent space at X.

\begin{lemma}
 Any $Z \in \mathcal{T}_X$ satisfies:
 $$Z^{\top}X + X^{\top}Z= 0$$
 \ie $Z^{\top}X$ is a skew-symmetric $p \times p$ matrix. 
\end{lemma}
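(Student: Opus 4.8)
The plan is to characterize the tangent space $\mathcal{T}_X$ by differentiating the defining constraint of the Stiefel manifold. The manifold is the zero set of the map $F(X) = X^{\top}X - I$, which takes values in the space of symmetric $p \times p$ matrices. A tangent vector at $X$ is, by definition, the velocity $Z = \gamma'(0)$ of some smooth curve $\gamma(t)$ lying on the manifold with $\gamma(0) = X$. So I would first fix such a curve, with $\gamma(t)^{\top}\gamma(t) = I$ for all $t$ in a neighborhood of $0$.

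The key step is to differentiate this identity with respect to $t$ using the product rule: $\frac{d}{dt}\bigl(\gamma(t)^{\top}\gamma(t)\bigr) = \gamma'(t)^{\top}\gamma(t) + \gamma(t)^{\top}\gamma'(t)$. Since the left-hand side is the constant matrix $I$, its derivative vanishes identically, and evaluating at $t = 0$ gives $Z^{\top}X + X^{\top}Z = 0$. This is exactly the claimed relation, and it says precisely that the $p \times p$ matrix $X^{\top}Z$ equals the negative of its own transpose, i.e.\ it is skew-symmetric. I would then remark (briefly, since the lemma as stated only asks for the forward inclusion) that a dimension count shows every $Z$ satisfying this relation is in fact tangent, so the condition characterizes $\mathcal{T}_X$ exactly; this is what legitimizes the closed-form projection formula~\eqref{eq:tangent_cipp} used later.

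There is not much of an obstacle here — the only point requiring any care is making sure the differentiation is applied to an identity that holds on an open interval (not just at a point), so that the derivative is genuinely zero; this is guaranteed because $\gamma$ maps into the manifold. A secondary subtlety, if one wants the full characterization rather than just the inclusion, is justifying that the constraint map $F$ is a submersion on the manifold so that the tangent space has the expected dimension $np - \tfrac{1}{2}p(p+1)$ and the linear condition $Z^{\top}X + X^{\top}Z = 0$ cuts out a subspace of exactly that dimension; but for the lemma as stated, the one-line differentiation argument suffices.
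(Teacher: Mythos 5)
Your proof is correct, and it is the standard curve-differentiation argument that the paper implicitly relies on (the lemma is stated without proof, deferring to the cited notes of Tagare, where exactly this differentiation of $\gamma(t)^{\top}\gamma(t)=I$ at $t=0$ is used). Your closing remark that the condition in fact characterizes $\mathcal{T}_X$ by a dimension count is a reasonable, correct addition consistent with how the paper later uses the tangent-space description.
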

Note, that $X$ consists of $p$ orthonormal vectors in $\mathbb{R}^n$. 
Let $X_{\perp}$ be a matrix consisting of the additional $n-p$ orthonormal vectors in $\mathbb{R}^n$ \ie $X_{\perp}$ lies in the orthogonal compliment of $X$, $X^{\top}X_{\perp}= 0$.
The concatenation of $X$ and $X_{\perp}$, $[XX_{\perp}]$ is $n \times n$ orthonormal matrix.
Then, any matrix $U \in \mathbb{R}^{n \times p}$ can be represented as: $U = XA + X_{\perp}B$, where $A$ is a $p \times p$ matrix, and $B$ is a $(n-p) \times p$ matrix.

\begin{lemma} \label{lem:skew_A}
 A matrix $Z=XA+X_{\perp}B$ belongs to the tangent space at a point on Stiefel manifold $\mathcal{T}_X$ iff A is skew-symmetric.
\end{lemma}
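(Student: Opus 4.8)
The plan is to reduce the claim to the infinitesimal constraint $Z^{\top}X + X^{\top}Z = 0$ already isolated in the preceding lemma, and then to read off skew-symmetry of $A$ by exploiting the orthonormality of the enlarged basis $[X\; X_{\perp}]$. Since $[X\; X_{\perp}]$ is an $n \times n$ orthogonal matrix, we have $X^{\top}X = I$, $X^{\top}X_{\perp} = 0$, $X_{\perp}^{\top}X = 0$, and every $Z \in \mathbb{R}^{n\times p}$ admits a \emph{unique} decomposition $Z = XA + X_{\perp}B$ with $A \in \mathbb{R}^{p\times p}$ and $B \in \mathbb{R}^{(n-p)\times p}$.

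First I would carry out the routine algebraic step: substitute $Z = XA + X_{\perp}B$ into $Z^{\top}X + X^{\top}Z$. Using the orthonormality relations, the cross terms involving $X^{\top}X_{\perp}$ and $X_{\perp}^{\top}X$ vanish and $X^{\top}X = I$, leaving $Z^{\top}X + X^{\top}Z = A^{\top} + A$. Hence the constraint $Z^{\top}X + X^{\top}Z = 0$ holds iff $A^{\top} = -A$, i.e. $A$ is skew-symmetric. Combined with the previous lemma, this already yields the ``only if'' direction: if $Z \in \mathcal{T}_X$ then $Z^{\top}X + X^{\top}Z = 0$, so $A$ must be skew-symmetric.

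For the ``if'' direction I would need the converse of the previous lemma, namely that $S := \{Z : Z^{\top}X + X^{\top}Z = 0\}$ is \emph{exactly} the tangent space $\mathcal{T}_X$, not merely a superset. The cleanest route is to view $\mathrm{St}(n,p)$ as the level set $F^{-1}(I)$ of $F:\mathbb{R}^{n\times p}\to\mathrm{Sym}(p)$, $F(X)=X^{\top}X$: its differential at $X$ is $Z\mapsto Z^{\top}X+X^{\top}Z$, which is surjective onto $\mathrm{Sym}(p)$ (for symmetric $C$, take $Z = \tfrac12 XC$), so $I$ is a regular value and $\mathcal{T}_X = \ker DF_X = S$ by the regular-value theorem. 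Alternatively one can bypass differential topology with a dimension count: $S$ is the kernel of the surjection $DF_X$ onto $\mathrm{Sym}(p)$, hence $\dim S = np - \binom{p+1}{2}$, matching the known dimension of $\mathrm{St}(n,p)$; containment $\mathcal{T}_X \subseteq S$ from the previous lemma together with equal dimensions forces $\mathcal{T}_X = S$. Either way, once $\mathcal{T}_X = S$ is established, the computation above shows $XA + X_{\perp}B \in \mathcal{T}_X$ for every skew-symmetric $A$ and arbitrary $B$, finishing the proof.

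The main obstacle is precisely this last point: the preceding lemma supplies only one inclusion, so the ``if'' direction hinges on knowing that the first-order constraint cuts out the full tangent space. I expect the algebra to be immediate; the only genuine content is invoking the regular-value theorem (or, equivalently, the dimension of the Stiefel manifold) to upgrade ``necessary'' to ``necessary and sufficient''.
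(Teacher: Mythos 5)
Your proof is correct. The paper itself states this lemma without proof (it simply follows the cited notes of Tagare), so there is no in-paper argument to compare against; your route --- computing $Z^{\top}X + X^{\top}Z = A^{\top}+A$ from the orthonormality of $[X\;X_{\perp}]$, and then identifying the constraint set $\{Z : Z^{\top}X+X^{\top}Z=0\}$ with the full tangent space via the regular-value theorem, or equivalently the dimension count $np-\tbinom{p+1}{2}$ --- is the standard argument and is complete. You are also right about where the only genuine content lies: the paper's preceding lemma gives only the inclusion $\mathcal{T}_X \subseteq \{Z: Z^{\top}X+X^{\top}Z=0\}$, so the ``if'' direction needs the reverse inclusion, a point the paper glosses over and which your surjectivity-of-$DF_X$ (take $Z=\tfrac12 XC$ for symmetric $C$) or dimension-matching step closes properly.
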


Let $G \in \mathbb{R}^{n \times p}$ be the gradient computed at $X$. 
Let the projection of the gradient on the tangent space is denoted by $\pi_{\mathcal{T}_{X}}(G)$.
\begin{lemma}
Under the canonical inner product, the projection of the gradient on the tangent space is given by $\pi_{\mathcal{T}_{X}}(G) = AX$, where $A=GX^{\top}-XG^{\top}$.
\begin{proof}
Express $G=XG_{A} + X_{\perp}G_{B}$.
Let $Z$ be any vector in the tangent space, expressed as $Z=XZ_{A} + X_{\perp}Z_{B}$, where $Z_A$ is a skew-symmetric matrix according to~\ref{lem:skew_A}. 
Therefore,
\begin{align} \label{eq:tang_proof_1}
    \pi_{\mathcal{T}_{X}}(G) &= \textrm{tr}(G^{\top}Z), \nonumber \\
                             &= \textrm{tr}((XG_{A} + X_{\perp}G_{B})^{\top}(XZ_{A} + X_{\perp}Z_{B})), \nonumber \\
                             &= \textrm{tr}(G_{A}^{\top} Z_A + G_B^{\top} Z_{B}). 
\end{align}
Writing $G_A$ as $G_A = \textrm{sym}(G_A) + \textrm{skew}(G_A)$, and plugging in \eqref{eq:tang_proof_1} gives,
\begin{equation} \label{eq:tang_proof_2}
    \pi_{\mathcal{T}_{X}}(G) = \textrm{tr} (\textrm{skew}(G_A)^{\top}Z_A + G_B^{\top}Z_B).
\end{equation}
Let $U = XA + X_{\perp}B$ is the vector that represents the projection of $G$ on the tangent space at $X$. 
Then, 
\begin{align} \label{eq:tang_proof_3}
    \innp{U}{Z}_{c} &= \textrm{tr}(U^{\top} (I - \frac{1}{2} XX^{\top})Z), \nonumber \\
                    &= \textrm{tr}((XA + X_{\perp}B)^{\top} (I - \frac{1}{2} XX^{\top})(XZ_{A} + X_{\perp}Z_{B})), \nonumber \\
                    &= \textrm{tr} (\frac{1}{2} A^{\top} Z_{A} + B^{\top} Z_B)
\end{align} 
By comparing \eqref{eq:tang_proof_2} and \eqref{eq:tang_proof_3}, we get $A = 2\textrm{skew}(G_A)$ and $B=G_B$. Thus, 
\begin{align*}
    U &= 2X \textrm{skew}(G_A) + X_{\perp} G_B, \\
      &= X(G_A - G_A^{\top}) + X_{\perp} G_B, \quad \because \textrm{skew}(G_A) = \frac{1}{2}(G_A - G_A^{\top}) \\
      &= XG_A - X G_A^{\top} + G - XG_A, \quad \because G=XG_{A} + X_{\perp}G_{B} \\
      &= G - XG_A^{\top}, \\
      &= G - XG^{\top}X, \quad \because G_A = X^{\top}G, \\
      &= GX^{\top}X - XG^{\top}X, \\
      &= (GX^{\top} - XG^{\top}) X
\end{align*}

\end{proof}
\end{lemma}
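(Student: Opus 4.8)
The plan is to read $\pi_{\mathcal{T}_X}$ as the Riemannian gradient attached to the Euclidean gradient $G$ through the \emph{canonical} metric, express everything in block coordinates with respect to the orthonormal frame $[X\ X_\perp]$, and then match coefficients. Concretely, I would first pin down the defining property: $\pi_{\mathcal{T}_X}(G)$ is the unique $U \in \mathcal{T}_X$ satisfying $\innp{U}{Z}_c = \textrm{tr}(G^\top Z)$ for every $Z \in \mathcal{T}_X$, where $\innp{Z_1}{Z_2}_c = \textrm{tr}\big(Z_1^\top(I - \tfrac12 XX^\top)Z_2\big)$ is the canonical inner product. Existence and uniqueness are immediate because $\innp{\cdot}{\cdot}_c$ is a genuine inner product on the finite-dimensional space $\mathcal{T}_X$, so this $U$ is what we must compute.

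Next I would invoke the two preceding lemmas to write any tangent vector as $Z = XZ_A + X_\perp Z_B$ with $Z_A$ a $p\times p$ skew-symmetric matrix and $Z_B$ an arbitrary $(n-p)\times p$ matrix, and decompose $G = XG_A + X_\perp G_B$ with $G_A = X^\top G$. Orthonormality of $[X\ X_\perp]$ collapses $\textrm{tr}(G^\top Z)$ to $\textrm{tr}(G_A^\top Z_A + G_B^\top Z_B)$; splitting $G_A = \textrm{sym}(G_A) + \textrm{skew}(G_A)$ and using that the trace of a symmetric matrix times a skew matrix vanishes leaves $\textrm{tr}(G^\top Z) = \textrm{tr}\big(\textrm{skew}(G_A)^\top Z_A + G_B^\top Z_B\big)$. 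Writing the unknown as $U = XA + X_\perp B$ with $A$ skew, the factor $(I - \tfrac12 XX^\top)$ acts as $\tfrac12$ on the $X$-block and as the identity on the $X_\perp$-block (since $X^\top X = I$ and $X^\top X_\perp = 0$), so $\innp{U}{Z}_c = \textrm{tr}\big(\tfrac12 A^\top Z_A + B^\top Z_B\big)$.

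Then I would match the two expressions over all admissible $Z_A, Z_B$: since $Z_B$ is arbitrary we get $B = G_B$, and since $Z_A$ ranges over all skew-symmetric matrices while $\tfrac12 A^\top$ is already skew, nothing is lost by pairing only against skew matrices, so $A = 2\,\textrm{skew}(G_A) = G_A - G_A^\top$. Substituting $G_A = X^\top G$ and $X_\perp G_B = G - XX^\top G$ gives $U = XX^\top G - XG^\top X + G - XX^\top G = G - XG^\top X$, and a final rewrite $G - XG^\top X = GX^\top X - XG^\top X = (GX^\top - XG^\top)X$ produces exactly $AX$ with the $n\times n$ skew matrix $A = GX^\top - XG^\top$.

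The step I expect to need the most care is the matching argument together with the notational overloading of ``$A$''. One has to be explicit that testing $\tfrac12 A^\top$ against every skew-symmetric $Z_A$ determines $A$ in full (not merely its skew part) precisely because $A$ is constrained to be skew, and one has to keep the $p\times p$ block coefficient distinct from the $n\times n$ matrix $GX^\top - XG^\top$, which coincide only inside the product $X A_{p\times p} = A_{n\times n} X$. The remaining manipulations --- collapsing traces via the orthonormal frame and the closing algebraic substitution --- are routine once $G = XG_A + X_\perp G_B$, $X^\top X = I$ and $X^\top X_\perp = 0$ are available.
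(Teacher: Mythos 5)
Your proposal is correct and follows essentially the same route as the paper's proof: the same decomposition of $G$ and of tangent vectors in the frame $[X\ X_{\perp}]$, the same sym/skew split of $G_A$, the same coefficient matching under the canonical inner product $\textrm{tr}\big(U^{\top}(I-\tfrac{1}{2}XX^{\top})Z\big)$, and the same closing algebra yielding $(GX^{\top}-XG^{\top})X$. If anything, you are slightly more careful than the paper, by stating the defining variational property of the projection explicitly and by distinguishing the $p\times p$ skew block coefficient from the $n\times n$ matrix $GX^{\top}-XG^{\top}$ in the lemma statement.
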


\section{More Results} \label{sec:more_results}
\begin{table}[!h]
\begin{center}
\begin{small}
\begin{sc}
\caption{\emph{Accuracy~\eqref{eq:accuracy} and Forgetting~\eqref{eq:forgetting} results of continual learning experiments for larger episodic memory sizes.
$2$, $3$ and $5$ samples per class per task are stored, respectively.
Top table is for Split CIFAR.
Bottom table is for Split miniImageNet.}
}
\resizebox{1.0\columnwidth}{!}{%
\begin{tabular}{lccc|ccc}
\toprule
\multicolumn{1}{l}{\textbf{Method}} & \multicolumn{3}{c}{\textbf{Accuracy}} &\multicolumn{3}{c}{\textbf{Forgetting}}   \\
\hline
& 2 & 3 & 5 & 2 & 3 & 5 \\
\midrule
AGEM                & 52.2 (\textpm 2.59) & 56.1 (\textpm 1.52) & 60.9 (\textpm 2.50) & 0.16 (\textpm 0.01) & 0.13 (\textpm 0.01) & 0.11 (\textpm 0.01) \\
ER-Ring             & 61.9 (\textpm 1.92) & 64.8 (\textpm 0.77) & 67.2 (\textpm 1.72) & 0.11 (\textpm 0.02) & 0.08 (\textpm 0.01) & 0.06 (\textpm 0.01) \\
\ours{}             & 64.7 (\textpm 0.53) & 66.8 (\textpm 0.83) & 67.3 (\textpm 0.98) & 0.07 (\textpm 0.01) & 0.05 (\textpm 0.01) & 0.05 (\textpm 0.01) \\
\bottomrule
\end{tabular}}

\vskip 0.5cm
\resizebox{1.0\columnwidth}{!}{%
\begin{tabular}{lccc|ccc}
\toprule
\multicolumn{1}{l}{\textbf{Method}} & \multicolumn{3}{c}{\textbf{Accuracy}} &\multicolumn{3}{c}{\textbf{Forgetting}}   \\
\midrule
& 2 & 3 & 5 & 2 & 3 & 5 \\
\midrule
AGEM                & 45.2 (\textpm 2.35) & 47.5 (\textpm 2.59) & 49.2 (\textpm 3.35) & 0.14 (\textpm 0.01) & 0.13 (\textpm 0.01) & 0.10 (\textpm 0.01) \\
ER-Ring             & 51.2 (\textpm 1.99) & 53.9 (\textpm 2.04) & 56.8 (\textpm 2.31) & 0.10 (\textpm 0.01) & 0.09 (\textpm 0.02) & 0.06 (\textpm 0.01) \\
\ours{}             & 53.4 (\textpm 1.23) & 55.6 (\textpm 0.55) & 58.2 (\textpm 1.08) & 0.07 (\textpm 0.01) & 0.06 (\textpm 0.01) & 0.05 (\textpm 0.01) \\
\bottomrule
\end{tabular}}
\label{tab:large_mem_size}
\end{sc}
\end{small}
\end{center}
\vskip -0.2in
\end{table}

\begin{figure}[!h]
		\def \SUBWIDTH {0.5\linewidth}
		\def \FIGSCALE {0.5}
		\def \HEIGHT {10ex}
		\def \HORZSPACE {-0.10em}
		\def \VERSPSACE {-0.0em}
        \begin{subfigure}{\SUBWIDTH}
        \begin{center}
                \includegraphics[scale=\FIGSCALE]{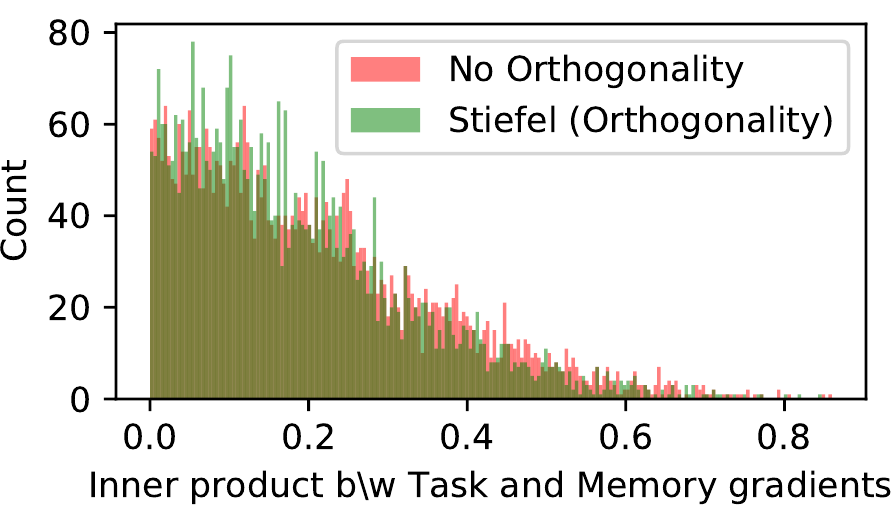}
        \end{center}
        \vskip -0.15 in
        \caption{\small L1}
        \end{subfigure}\vspace{\HORZSPACE}
        \begin{subfigure}{\SUBWIDTH}
        \begin{center}
                \includegraphics[scale=\FIGSCALE]{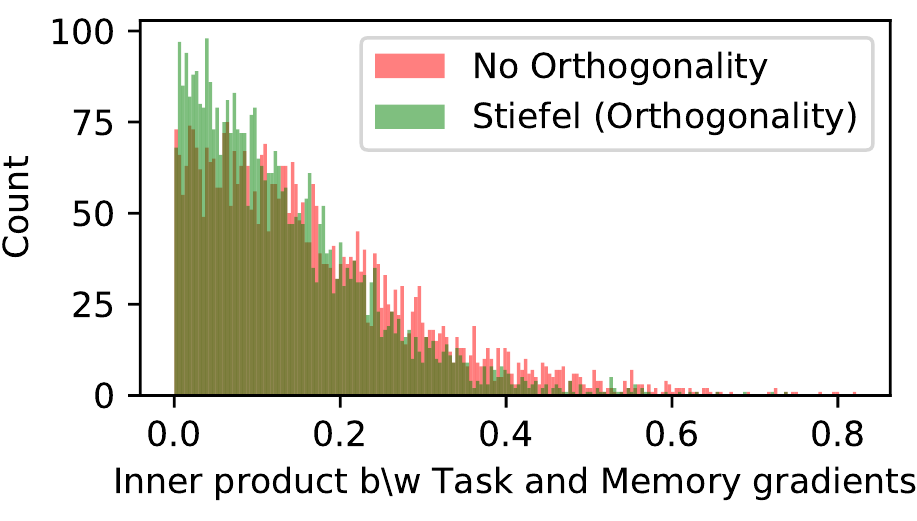}
        \end{center}
        \vskip -0.15 in
        \caption{\small L2}
        \end{subfigure}\vspace{\HORZSPACE}
        
        \begin{subfigure}{\SUBWIDTH}
        \begin{center}
                \includegraphics[scale=\FIGSCALE]{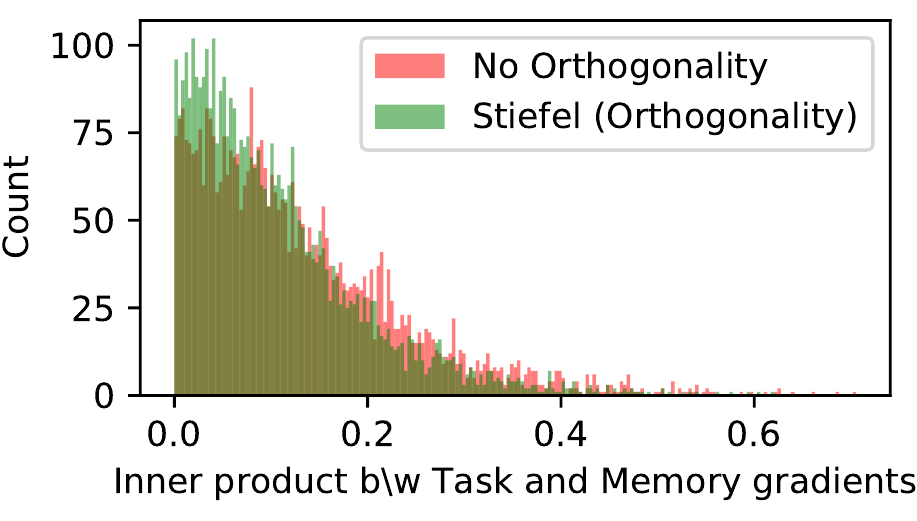}
        \end{center}
        \vskip -0.15 in
        \caption{\small L3}
        \end{subfigure}\vspace{\HORZSPACE}
        \begin{subfigure}{\SUBWIDTH}
        \begin{center}
                \includegraphics[scale=\FIGSCALE]{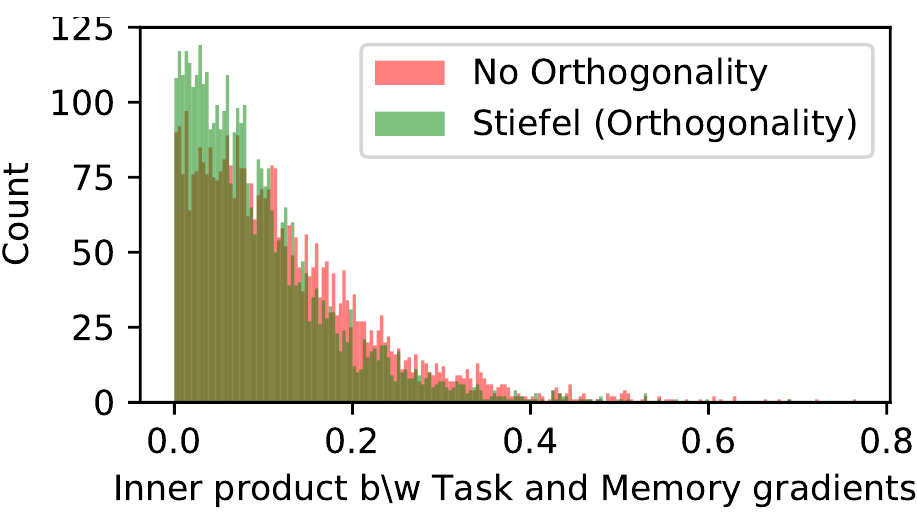}
        \end{center}
        \vskip -0.15 in
        \caption{\small L4}
        \end{subfigure}\vspace{\HORZSPACE}
        
        \begin{subfigure}{\SUBWIDTH}
        \begin{center}
                \includegraphics[scale=\FIGSCALE]{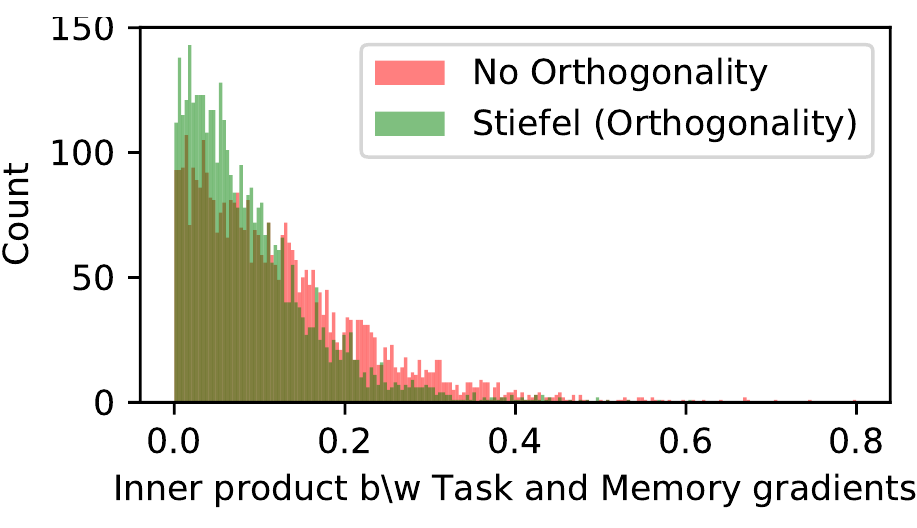}
        \end{center}
        \vskip -0.15 in
        \caption{\small L5}
        \end{subfigure}\vspace{\HORZSPACE}
        \begin{subfigure}{\SUBWIDTH}
        \begin{center}
                \includegraphics[scale=\FIGSCALE]{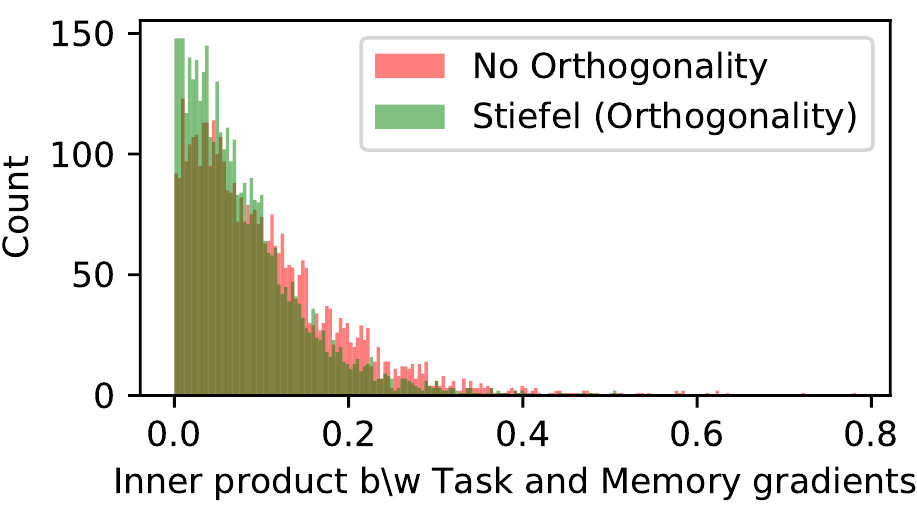}
        \end{center}
        \vskip -0.15 in
        \caption{\small L6}
        \end{subfigure}\vspace{\HORZSPACE}
        
        \begin{subfigure}{\SUBWIDTH}
        \begin{center}
                \includegraphics[scale=\FIGSCALE]{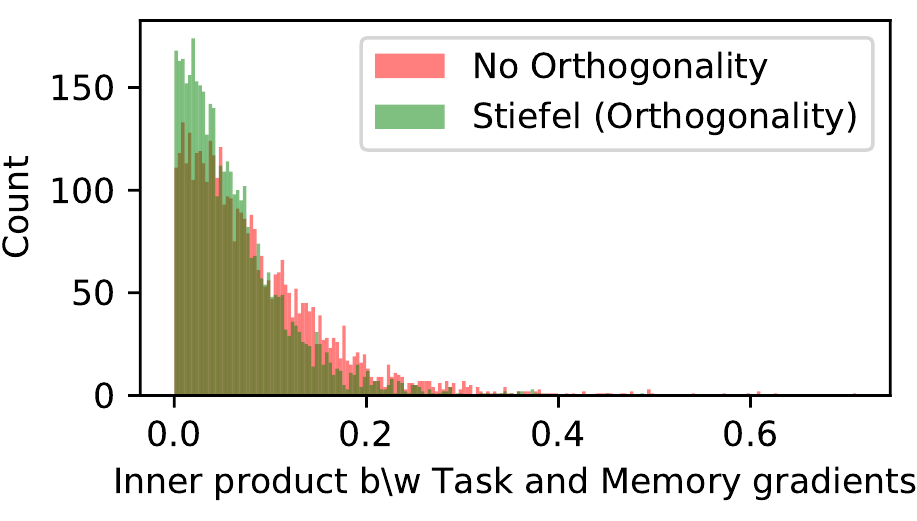}
        \end{center}
        \vskip -0.15 in
        \caption{\small L7}
        \end{subfigure}\vspace{\HORZSPACE}
        \begin{subfigure}{\SUBWIDTH}
        \begin{center}
                \includegraphics[scale=\FIGSCALE]{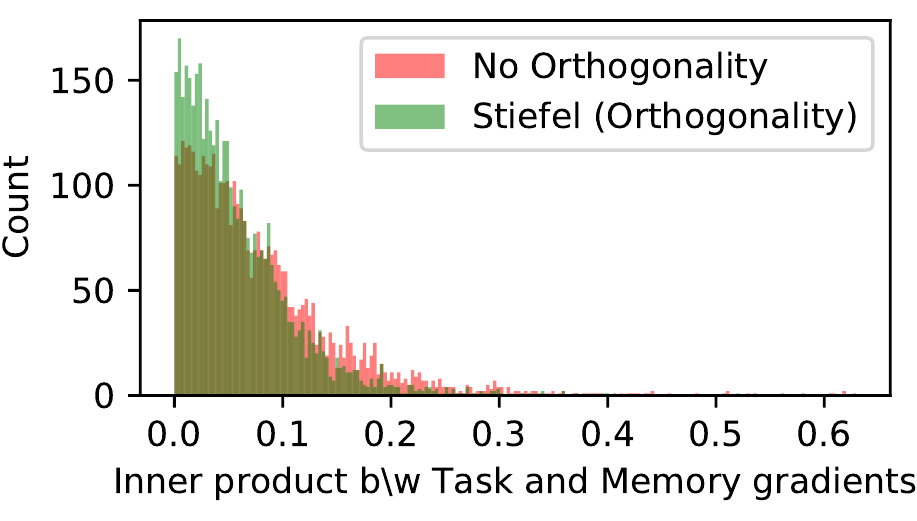}
        \end{center}
        \vskip -0.15 in
        \caption{\small L8}
        \end{subfigure}\vspace{\HORZSPACE}
        
        \begin{subfigure}{\SUBWIDTH}
        \begin{center}
                \includegraphics[scale=\FIGSCALE]{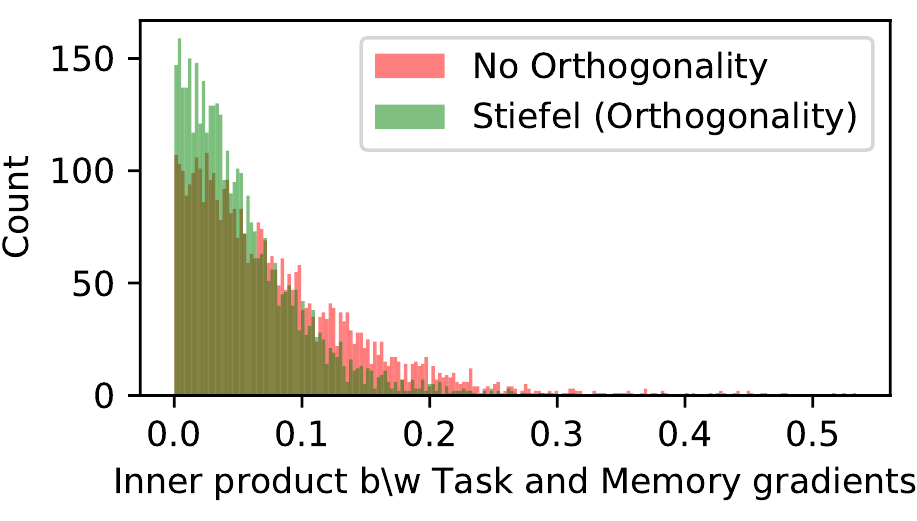}
        \end{center}
        \vskip -0.15 in
        \caption{\small L9}
        \end{subfigure}\vspace{\HORZSPACE}
        \begin{subfigure}{\SUBWIDTH}
        \begin{center}
                \includegraphics[scale=\FIGSCALE]{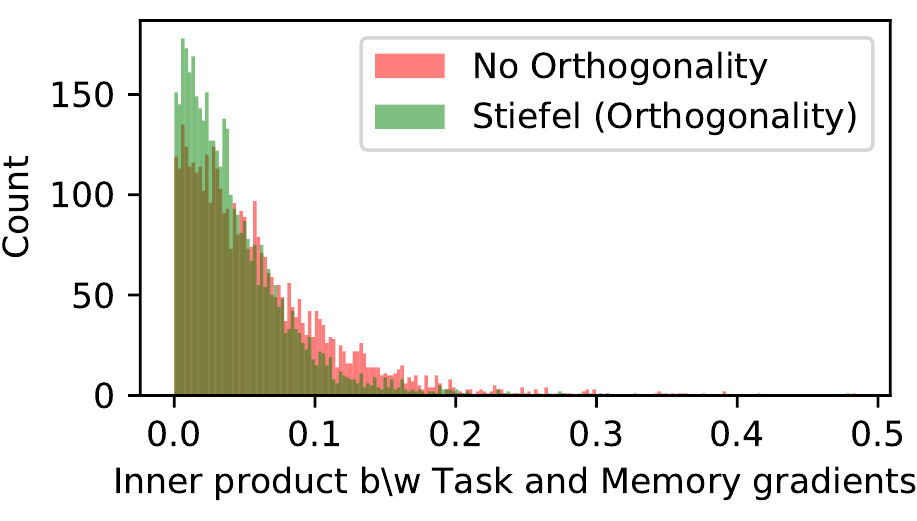}
        \end{center}
        \vskip -0.15 in
        \caption{\small L10}
        \end{subfigure}\vspace{\HORZSPACE}
        
        \begin{subfigure}{\SUBWIDTH}
        \begin{center}
                \includegraphics[scale=\FIGSCALE]{Figs/hist_layer10.pdf}
        \end{center}
        \vskip -0.15 in
        \caption{\small L11}
        \end{subfigure}\vspace{\HORZSPACE}
        \begin{subfigure}{\SUBWIDTH}
        \begin{center}
                \includegraphics[scale=\FIGSCALE]{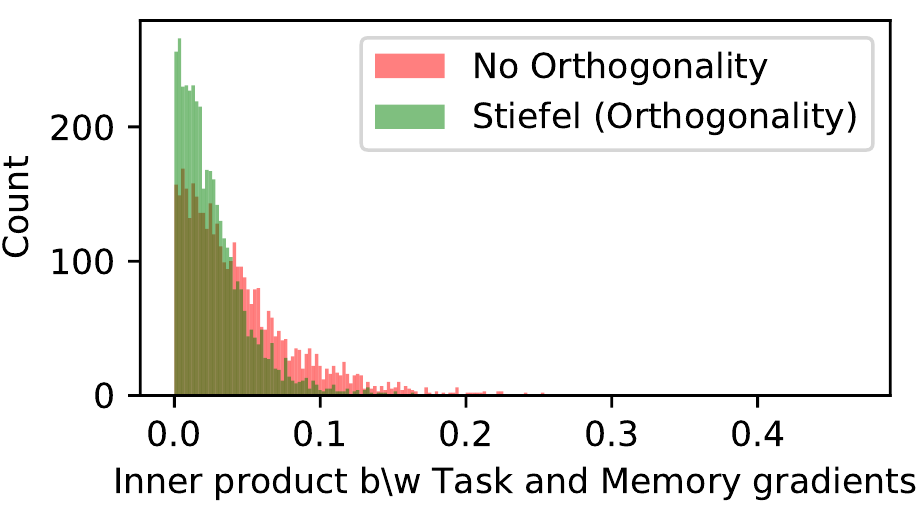}
        \end{center}
        \vskip -0.15 in
        \caption{\small L12}
        \end{subfigure}\vspace{\HORZSPACE}
        
        \begin{subfigure}{\SUBWIDTH}
        \begin{center}
                \includegraphics[scale=\FIGSCALE]{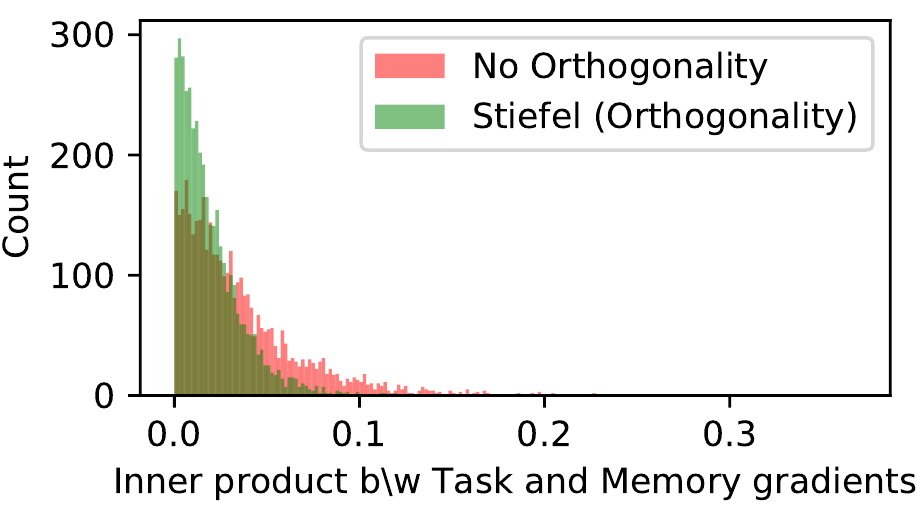}
        \end{center}
        \vskip -0.15 in
        \caption{\small L13}
        \end{subfigure}\vspace{\HORZSPACE}
        \begin{subfigure}{\SUBWIDTH}
        \begin{center}
                \includegraphics[scale=\FIGSCALE]{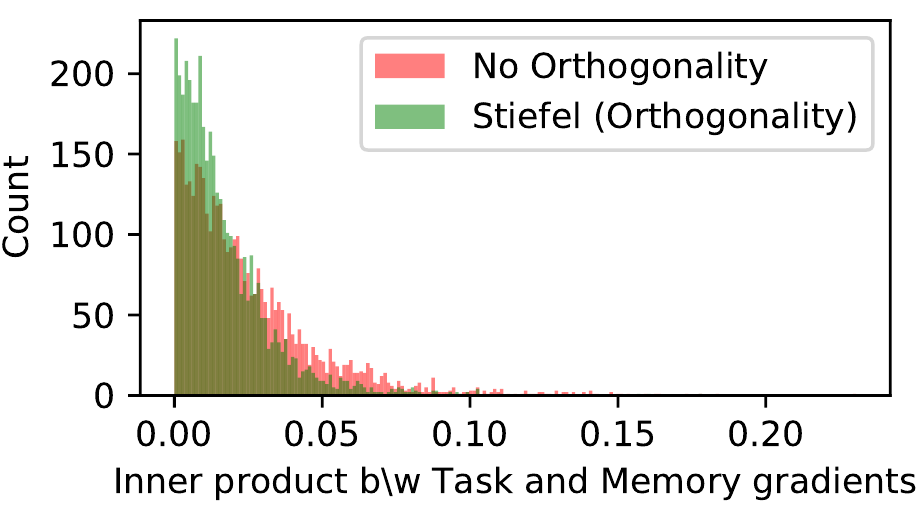}
        \end{center}
        \vskip -0.15 in
        \caption{\small L14}
        \end{subfigure}\vspace{\HORZSPACE}
        
        \begin{subfigure}{\SUBWIDTH}
        \begin{center}
                \includegraphics[scale=\FIGSCALE]{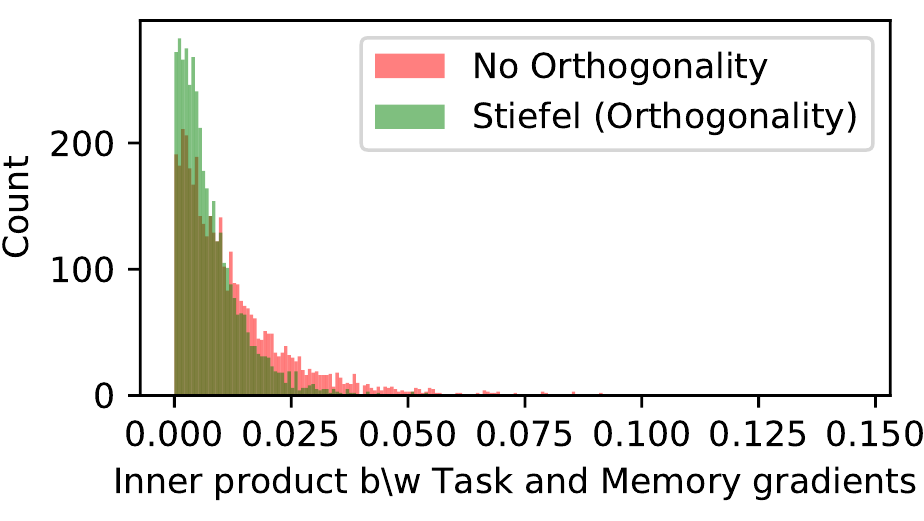}
        \end{center}
        \vskip -0.15 in
        \caption{\small L15}
        \end{subfigure}\vspace{\HORZSPACE}
        \begin{subfigure}{\SUBWIDTH}
        \begin{center}
                \includegraphics[scale=\FIGSCALE]{Figs/hist_layer15.pdf}
        \end{center}
        \vskip -0.15 in
        \caption{\small L16}
        \end{subfigure}\vspace{\HORZSPACE}
    \caption{\emph{Histogram of inner product of current task and memory gradients in all layers in Split CIFAR.}}
    \label{fig:all_grads_hist}
\end{figure}

\section{Hyper-parameter Selection} \label{sec:supp_hyperparam}

In this section, we report the hyper-parameters grid considered for experiments.
The best values for different benchmarks are given in parenthesis.

\begin{itemize}
    \item Multitask
        \begin{itemize}
            \item \texttt{learning rate: [0.003, 0.01, 0.03 (CIFAR, miniImageNet), 0.1 (MNIST perm, rot), 0.3, 1.0]}
        \end{itemize}
        
    \item {Finetune}
        \begin{itemize}
            \item \texttt{learning rate: [0.003, 0.01, 0.03 (CIFAR, miniImageNet), 0.1 (MNIST perm, rot), 0.3, 1.0]}
        \end{itemize}
        
    \item {EWC}
        \begin{itemize}
            \item \texttt{learning rate: [0.003, 0.01, 0.03 (CIFAR, miniImageNet), 0.1 (MNIST perm, rot), 0.3, 1.0]}
            \item \texttt{regularization: [0.1, 1, 10 (MNIST perm, rot, CIFAR, miniImageNet), 100, 1000]}
        \end{itemize}
        
    \item {AGEM}
        \begin{itemize}
            \item \texttt{learning rate: [0.003, 0.01, 0.03 (CIFAR, miniImageNet), 0.1 (MNIST perm, rot), 0.3, 1.0]}
        \end{itemize}
        
    \item {MER}
        \begin{itemize}
            \item \texttt{learning rate: [0.003, 0.01, 0.03 (MNIST, CIFAR, miniImageNet), 0.1, 0.3, 1.0]}
            \item \texttt{within batch meta-learning rate: [0.01, 0.03, 0.1 (MNIST, CIFAR, miniImageNet), 0.3, 1.0]}
            \item \texttt{current batch learning rate multiplier: [1, 2, 5 (CIFAR, miniImageNet), 10 (MNIST)]}
            
        \end{itemize}
        
    \item {ER-Ring}
        \begin{itemize}
            \item \texttt{learning rate: [0.003, 0.01, 0.03 (CIFAR, miniImageNet), 0.1 (MNIST perm, rot), 0.3, 1.0]}
        \end{itemize}
        
    \item {\ours{}}
        \begin{itemize}
            \item \texttt{learning rate: [0.003, 0.01, 0.03, 0.1 (MNIST perm, rot), 0.2 (miniImageNET), 0.4 (CIFAR), 1.0]}
        \end{itemize}
        
\end{itemize}

\end{document}